\newcommand{\kibitz}[2]{\ifnum\Comments=1{\color{#1}{#2}}\fi}
\definecolor{english}{rgb}{0.0, 0.5, 0.0}
\renewcommand{\cal}{\mathcal}
\newcommand{\cD}{{\cal D}}
\newcommand{\cN}{{\cal N}}
\newcommand{\cP}{{\cal P}}
\newcommand{\sD}{{\mathscr{D}}}
\newcommand{\bE}{\mathbb{E}}
\newcommand{\bP}{\mathbb{P}}
\renewcommand{\leq}{\leqslant}
\renewcommand{\geq}{\geqslant}
\theoremstyle{plain}
\newtheorem{theorem}{Theorem}[section]
\newtheorem{lemma}[theorem]{Lemma}
\theoremstyle{definition}
\newtheorem{definition}[theorem]{Definition}
\theoremstyle{remark}
\DeclareMathOperator{\argmin}{argmin}
\title{Decision-Aware Conditional GANs for Time Series Data}
\author{He Sun\textsuperscript{\rm 1}
        ~~Zhun Deng\textsuperscript{\rm 1}
        ~~Hui Chen\textsuperscript{\rm 2}
        ~~David C. Parkes\textsuperscript{\rm 1}\\}
\date{\textsuperscript{\rm 1} Harvard University 
    \textsuperscript{\rm 2} Massachusetts Institute of Technology \\
    he\_sun@g.harvard.edu, zhundeng@g.harvard.edu, huichen@mit.edu, parkes@eecs.harvard.edu}
\begin{document}

\maketitle

\begin{abstract}
We introduce the  {\em decision-aware time-series conditional generative adversarial network} (DAT-CGAN), a method for time-series generation. The framework adopts a multi-Wasserstein loss on decision-related quantities and is designed to support decision-making. We use an
 overlapped block-sampling approach for sample efficiency, and
 characterize  the generalization properties of DAT-CGAN. In application 
to financial time series and a multi-period portfolio choice problem, we
 demonstrate better training stability and generative quality in regard to both raw data and decision-related quantities than strong GAN-based baselines.
\end{abstract}

\section{Introduction}
\label{sec:introduction}

High-fidelity simulators are  desirable across many domains,  due to a paucity of data, or because of high stakes in the deployment of automated decision methods. A good simulator can improve sample efficiency
 and performance when evaluating a decision method.  
A gap in current GAN-based approaches, however, is that they are not {\em decision-aware}, but focus instead on modeling
 the raw synthetic data distribution~\cite{Koshiyama2019,Yoon2019}. The
 issue is that the approximation error in the raw synthetic data distribution 
can  be further amplified, after the transformation through decision-related functions,  making the approximation to the underlying distribution of decision-related quantities unreliable.

We show that taking decision-related quantities into account during training
improves the effectiveness of GANs in support of decision making and
 helps to stabilize the  training process. We do this by introducing   {\em decision-related quantities} to the loss function. We focus on time-series data, and consider settings where decisions are made over time 
and based on multi-step inference.
In such settings, data scarcity can be a particular challenge, either due to a limited sample size or non-stationarity.\footnote{

For example, weekly financial data series only provide around 50 observations per year, and pooling across multiple years is not effective due to potential distributions shifts. Even for high-frequency data, distribution shift remains a concern due to the potential changes in the composition of market participants and trading rules over time.}
Previous work has applied bootstrap methods to improve the sample efficiency of estimators for time-series data~\cite{Buhlmann2015}, but without introducing decision-related quantities or providing finite-sample guarantees on the generalization error.

Another challenge when training sequential generative models is  {\em exposure bias}~\cite{Ranzato15}.  First studied in language models, this  arises when models are trained to predict one-step forward using previous ground truth observations, whereas at test time they are used to generate an entire sequence. As a result, the generated data distribution can diverge from the training distribution, with accumulating errors. Although exposure bias has received attention in language models~\cite{Bengio2015,Rennie2017}, this problem remains present in other generative model applications~\cite{Li2019}. 

\noindent\textbf{Our Contributions.} We propose a novel, {\em decision-aware time-series conditional generative adversarial network (DAT-CGAN)}. The  training procedure is made decision aware by imposing a {\em multi-Wasserstein loss structure}, which includes loss terms on  decision-related quantities in addition to the raw data. We remove exposure bias by aligning training and evaluation with the same number of look-ahead steps. In improving sample efficiency, we adopt an {\em overlapped block-sampling mechanism}. The design of the generator and discriminator in the DAT-CGAN is non-trivial since the generator needs to capture the structural relationship between different decision-related quantities. We provide the discriminator with access to the same amount of conditioning information as the generator to avoid it being too strong relative to the generator. 

We provide a theoretical characterization,
giving non-asymptotic generalization
bounds for Conditional GANs for time series with overlapping block sampling, and with a loss function that includes decision-related quantities.
In experimental results, we evaluate our framework on a portfolio choice problem for a risk-averse investor~\cite{Markowitz1952}. A high-quality simulator will help an investor characterize the distribution of portfolio returns more reliably than relying on simplistic parametric assumptions such as normality, for example
in regard to computing   measures such as {\em Value-at-Risk}
that  are important for risk management and financial regulation. We demonstrate on both simulated and real data 
that  introducing portfolio-decision-related quantities into the loss function, 
in addition to asset returns, 
the DAT-CGAN framework achieves better fit to quantities of interest than strong, GAN-based baselines.

\noindent\textbf{Related Work.} The  literature on  GANs for time-series data does not consider decision awareness,  even for the non-time series context, and does not provide theoretical guarantees for  decision-related quantities or  conditional GANs with overlapped-sampling schemes~\cite{Takahashi2019,Zhou2018,Wiese2019,Ni2020,Yoon2019}. In the context of financial markets, \cite{Li2019} introduce  {\em stock-GAN}  for the  generation of order streams, and evaluate their approach on stylized facts about  market micro-structure. \cite{Koshiyama2019} study the use of GANs for generating additional synthetic samples to help 
with model calibration and aggregation. Neither method is decision aware, and they are  similar to the baselines we use in
 ablations (specifically, they are akin to a ``1-step, return-data only" baseline). There is also work that makes use of GANs  to perform anomaly detection in time series data~\cite{Li2019b,Li2018,Geiger2020,Bashar2020},  
for imputation for multivariate time series~\cite{Luo2018,Luo2019},
and  to study  causal  effects in economic models~\cite{Athey2019}. 

\section{Wasserstein GANs for Time Series Generation}
A Wasserstein GAN uses the {\em  Wasserstein distance} as the loss function. The Wasserstein distance between two random variables, $r$ and $r'$, distributed according to  $\mathcal{P}_r$ and $\mathcal{P}_{r'}$,
is 
\begin{equation*}
    W(\mathcal{P}_r, \mathcal{P}_{r'}) = \mbox{inf}_{\Gamma \in \Pi(\mathcal{P}_r, \mathcal{P}_{r'})}\mathbb{E}_{(r,r') \sim \Gamma}[\|r-r'\|],
\end{equation*}
where $\|\cdot\|$ is the $L^2$ norm, and $\Pi(\mathcal{P}_r, \mathcal{P}_{r'})$ is the set of all joint distributions  whose marginals equal to $\mathcal{P}_r$ and $\mathcal{P}_{r'}$. According to the  Kantorovich-Rubinstein duality~\cite{Villani2009}, the dual form can be written as:
\begin{equation*}
    \mbox{sup}_{\{h:\|h\|_L \leq 1\}} \mathbb{E}_{r\sim \mathcal{P}_r}[h(r)] - \mathbb{E}_{r'\sim \mathcal{P}_{r'}}[h(r')],
\end{equation*}
where $\|h\|_L$ is defined as $\sup_{x,x'}|h(x)-h(x')|/\|x-x'\|$.
For Wasserstein GANs, the goal is to minimize the Wasserstein distance between the non-synthetic data and synthetic data.\footnote{We use Wasserstein loss  because it tends to
 improve the learning process stability  relative to other choices,  for example in regard to  mode collapse,
 and to yield interpretable learning curves~\cite{Arjovsky2017}.} Following~\cite{Mirza2014}, we work with {\em Conditional GANs}, allowing for conditioning variables. For  functions $D_{\theta}$ and $G_{\eta}$, the {\em discriminator}
parameterized by $\theta$ and
the {\em generator}  parametrized by $\eta$, and
with  conditioning variable $x$, the  CGAN problem is
\begin{equation*}
    \mbox{min}_{\eta}\mbox{max}_{\theta} \mathbb{E}_{r\sim \mathcal{P}(r|x)}[D_{\theta}(r, x)] - \mathbb{E}_{z\sim \mathcal{P}(z)}[D_{\theta}(G_{\eta}(z, x), x)],
\end{equation*}
where $\mathcal{P}(r|x)$ and  $\mathcal{P}(z)$  denote the distribution of non-synthetic data and  input random seed, respectively. Here, the synthetic data comes from the generator, with  $r' = G_{\eta}(z, x)$ 
conditioning on $x$.

\section{Decision-Aware Time Series Conditional Generative Adversarial Network}


Let $(r_1,\ldots, r_T)$ denote a multivariate time series, where $r_t$ in time $t$ is a $d$-dimensional column vector. Let $x_t$ denote an $m$-dimensional  {\em time-series information vector}, summarizing relevant information up to time $t$. Let $R_{t+1:t+k} = (r_{t+1}, \ldots, r_{t+k})$ denote  a {\em $k$-length  block after time $t$}, where $k\in \{1,\ldots,K\}$ is the $k$th look ahead step, and $K$ is the total number of {\em look-ahead steps} to generate. Since in  applications such as finance the sample size is very limited,  we let the $R_{t+1:t+k}$ blocks   overlap with each other for different $t$ and $k$, in order to fully utilize the  samples.  In finance, $r_{t+1}$ could be the  asset returns at day $t+1$, $x_t$  the past asset returns, volatility, and other technical indicators, and
 $R_{t+1:t+k}$  the $k$-days forward asset returns (see Figure~\ref{fig_overlapped_sampling3}).

To model the  decision process of an end user, let $f_{j,k}(R_{t+1:t+k}, x_t)$ denote the $j$th 
decision-related quantity (a scalar, vector, or matrix), for $j\in \{1,\ldots,J\}$,
in look-ahead step $k$, and define $f_k(R_{t+1:t+k}, x_t)$ as $(f_{1,k}(R_{t+1:t+k}, x_t), \ldots, f_{J,k}(R_{t+1:t+k}, x_t))^{\intercal}.$ The above definition represents the $J$ decision-related quantities at look-ahead step $k$ given data $R_{t+1:t+k}$ and  information $x_t$. In finance, $f_{j,k}(R_{t+1:t+k}, x_t)$ could be the estimated co-variance of asset returns, 
or the portfolio weights, both determined using the information up to time $t+k$. 

\smallskip


\noindent\textbf{Multi-Wasserstein loss.} Let $r'_{t+k|t}$ denote the {\em synthetic data} generated from information vector $x_t$  for look-ahead step $k$, for $k\in \{1,\ldots,K\}$. We use  notation $r'_{t+k|t}$ rather than notation $r'_{t+k}$ because there is a difference, for example, between $r'_{12|9}$ and $r'_{12|10}$, where $r'_{12|9}$ is the synthetic data generated for day 12 conditioning on  information up to day 9, and $r'_{12|10}$  is  the synthetic data  for day 12 conditioning on  information up to day 10. For all  $t$, all $k$, we want the conditional distribution  on  synthetic data, $\mathcal{P}(r'_{t+k|t}|x_t)$, where $x_t$ is discrete, to match the conditional distribution on the non-synthetic data, $\mathcal{P}(r_{t+k}|x_t)$. Similarly, for all $t$, all $k$, we want the conditional distribution on decision-related quantities for synthetic data, $\mathcal{P}(f_{j,k}(R'_{t+1:t+k}, x_t)|x_t)$, where  $R'_{t+1:t+k} = (r'_{t+1|t}, \ldots, r'_{t+k|t})$, to match the conditional distribution, $\mathcal{P}(f_{j,k}(R_{t+1:t+k}, x_t)|x_t)$, on quantities computed for non-synthetic data; see Figure~\ref{fig_overlapped_sampling3}. It will be convenient to write $\mathcal{P}(R'_{t+1:t+K}|x_t)$ for $\{\mathcal{P}(r'_{t+k|t}|x_t)\}_{k\in \{1,\ldots,K\}}$.

 Adopting a separate loss term for each  quantity and  each look-ahead step $k$, we define the following multi-Wasserstein objective (written here for conditioning, $x_t$): 
\begin{align}
&\mbox{inf}_{\mathcal{P}(R'_{t+1:t+K}|x_t)} \sum_{k=1}^K \omega_k L^r_{k} + \sum_{k=1}^K\sum_{j=1}^J \lambda_{j,k} L^f_{j,k}, \label{equ:1}\\
&L^r_{k}= W\big(\mathcal{P}(r_{t+k}|x_t), \mathcal{P}(r'_{t+k|t}|x_t)\big)\notag \\
&L^f_{j,k} = W\big(\mathcal{P}(f_{j,k}(R_{t+1:t+k}, x_t)|x_t),\notag \\
&~~~~~~~~~~~\mathcal{P}(f_{j,k}(R'_{t+1:t+k},x_{t})|x_t)\big)\notag,
\end{align}
where $L^r_{k}$ denotes the loss for data at $k$ steps forward,  $L^f_{j,k}$  the loss for decision-related quantity $j$ at $k$ steps forward, and where values $\omega_k>0$ and $\lambda_{j,k}>0$ are weights.\footnote{An alternative formulation would impose the Wasserstein distance  on a vector concatenating all quantities. 
We justify this design choice  in the experimental results section.}

\smallskip

\noindent\textbf{Surrogate loss.}
Let $D_{\gamma_k}$  denote the discriminator at look-ahead step $k$,  with 
parameters
$\gamma_k$, and  $D_{\theta_{j,k}}$ 
   the discriminator for decision-related quantity $j$ at look-ahead step $k$,  with 
parameters $\theta_{j,k}$.
Let  $r'_{t+k|t} = G_{\eta}(z_{t,t+k}, x_t)$ denote the synthetic  data at look-ahead step $k$, where 
$G_{\eta}$ is the generator with parameters $\eta$, 
and where noise $z_{t,t+k} \sim N(0,I_d)$. Let $Z_{t,t+k} = (z_{t,t+1}, \ldots, z_{t,t+k})$ denote a $k$-length block of random seeds after $t$. We define the following in-expectation quantities:
\begin{align}
&\mathbb{E}^r_{k} = \mathbb{E}_{r_{t+k} \sim \mathcal{P}(r_{t+k}|x_t)}[D_{\gamma_k}(r_{t+k}, x_t)], 
\label{eq:Er} \\ 
 &\mathbb{E}^{G_{\eta}}_{k} = \mathbb{E}_{z_{t,t+k}\sim \mathcal{P}(z_{t,t+k})}[D_{\gamma_k}(r'_{t+k|t}, x_t)],   \label{eq:Eg}\\
&\mathbb{E}^{f,R}_{j,k} = \mathbb{E}_{R_{t+1:t+k} \sim \mathcal{P}(R_{t+1:t+k}|x_t)} \nonumber\\
&~~~~~~~~~~~~~~~~[D_{\theta_{j,k}}(f_{j,k}(R_{t+1:t+k}, x_t), x_t)], \label{eq:Efr} \\  
&\mathbb{E}^{f,G_{\eta}}_{j,k} = \mathbb{E}_{Z_{t,t+k} \sim \mathcal{P}(Z_{t,t+k})} \nonumber\\
&~~~~~~~~~~~~~~~~[D_{\theta_{j,k}}(f_{j,k}(R'_{t+1:t+k}, x_t), x_t)],  \label{eq:Efg}  
\end{align}
where~\eqref{eq:Er} and~\eqref{eq:Efr} are defined on non-synthetic data and~\eqref{eq:Eg} and~\eqref{eq:Efg} on synthetic data, and~\eqref{eq:Efr} and~\eqref{eq:Efg} are defined on derived, decision-related quantities.

To formulate the DAT-CGAN training problem, we use the Kantorovich-Rubinstein duality for each Wasserstein distance in~\eqref{equ:1}, and  sum over  the dual forms \cite{Villani2009}.
%
This provides a {\em  surrogate loss}, upper bounding  the original objective. 
The surrogate problem is a min-max optimization problem, with
 the {\em discriminator loss} defined as:
\begin{equation*}
\inf_{\eta} \sup_{\gamma_k, \theta_{j,k}} \sum_{k=1}^K \omega_k (\mathbb{E}^r_{k} - \mathbb{E}^{G_{\eta}}_{k}) + 
\sum_{k=1}^K\sum_{j=1}^J \lambda_{j,k} (\mathbb{E}^{f,R}_{j,k} - \mathbb{E}^{f,G_{\eta}}_{j,k}).
\end{equation*}
The {\em generator loss} is $$\inf_{\eta} - \sum_{k} \omega_k \mathbb{E}^{G_\eta}_{k} - \sum_{k,j} \lambda_{j,k} \mathbb{E}^{f,G_{\eta}}_{j,k}.$$
We also write  $\tilde{L}^r_{k} = \mathbb{E}^r_{k} - \mathbb{E}^G_{k}$  and   $\tilde{L}^f_{j,k} = \mathbb{E}^{f,R}_{j,k}-\mathbb{E}^{f,G_{\eta}}_{j,k}$, 
to denote the {\em discriminator loss} for  the raw data and decision-related quantities, respectively.

\noindent\textbf{Training procedure.} See Algorithm~\ref{alg1}.
Lines~2-3 
 prepare the data.
Lines~6-15 
train the discriminators: Line~7 
performs $K$-length block sampling;  Lines~8-10 
generate synthetic block samples for each time block, conditioning on the information vector; 
Lines~11-14 
update the discriminators. Lines~16-21 
train the generators: Lines~17-19 
generate synthetic block samples for each time block, conditioning on the information vector;
 Line~20 
updates the generators.

 We define sample estimates for expectations~\eqref{eq:Er},~\eqref{eq:Eg},~\eqref{eq:Efr},~\eqref{eq:Efg},   as
$\hat{\mathbb{E}}^r_k$, $\hat{\mathbb{E}}^{G_{\eta}}_k$, $\hat{\mathbb{E}}^{f,R}_{j,k}$ and $\hat{\mathbb{E}}^{f,G_{\eta}}_{j,k} $, respectively. Quantities $(r_{t_i+k}, f_k(R_{t_i+1:t_i+k}, x_{t_i}), x_{t_i}), \forall i$ are obtained by an overlapped block sampling scheme (see Figure \ref{fig_overlapped_sampling3}), where different blocks of samples can overlap with other blocks. 

\begin{figure}[t]
\centering
\includegraphics[width = .7\textwidth]{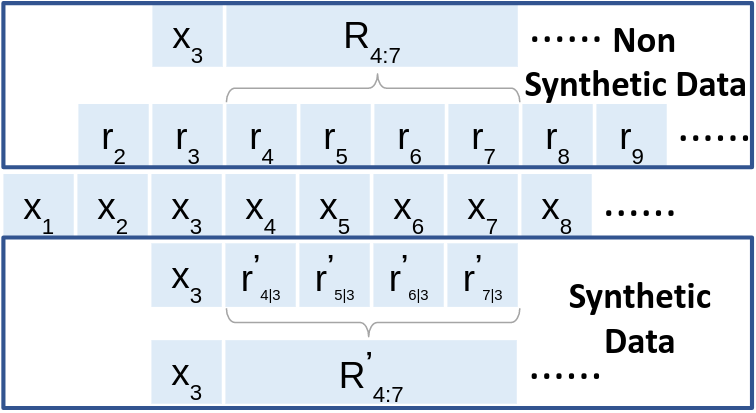}
\caption{Non-synthetic data and synthetic data generated by conditional GANs, shown for $K=4$. 
\label{fig_overlapped_sampling3}}
\end{figure}

\begin{algorithm}[t]
   \caption{\label{alg1}. Learning Rate $\alpha = 1e-5$, $\omega_k = \lambda_{j,k} = 0.8^k$, $s_D = 1$, $s_G = 5$, clipping $l_b = -0.5, u_b = 0.5$, $T=3500$, look ahead step $K=4$, batch size $I = 32$, training steps $N = 2e5$.}
\begin{algorithmic}[1]
   \STATE {\bfseries Require:} $\gamma_{k,0}$ and $\theta_{j,k,0}$, initial discriminator parameters; $\eta_0$, initial generator parameters. 
   \FOR {$t = 1, k = 1$ {\bfseries to} $T, K$}\label{alg_prepare_begin}
        \STATE Compute $R_{t+1:t+k}$ and $f_{j,k}(R_{t+1:t+k}, x_t), \forall j$ 
   \ENDFOR\label{alg_prepare_end}
   \WHILE {$n < N$}
   \FOR {$s = 0$ {\bfseries to} $s_D$}\label{alg_train_dis_begin}
   \STATE Make $I$  samples of $K$-size time blocks. \\The $i$th sample ($1 \leq i \leq I$)  ranges from time $t_i + 1$ to $t_i + K$, and consists of  data
   $(r_{t_i+k}, f_k(R_{t_i+1:t_i+k}, x_{t_i}), x_{t_i})^K_{k=1}$ \label{alg_block_sampling}
      \FOR {$i = 1, k = 1$ {\bfseries to} $I, K$}\label{alg_gen_syn_block_in_dis_train_begin}
        \STATE Sample $z_{t_i,t_i+k} \sim \mathcal{P}(z_{t_i,t_i+k})$; Compute $r'_{t_i+k|t_i} = G_{\eta}(z_{t_i,t_i+k}, x_{t_i})$; Compute $f_{j,k}(R'_{t_i+1:t_i+k}, x_{t_i}), \forall j$
      \ENDFOR\label{alg_gen_syn_block_in_dis_train_end}   
      \FOR {$k = 1$ {\bfseries to} $K$}\label{alg_update_dis_begin}
     \STATE  $\gamma_{k} \leftarrow \mbox{clip}(\gamma_{k} + \alpha \omega_k \nabla_{\gamma_k} \sum_{k=1}^{K} [\hat{\mathbb{E}}^r_k - \hat{\mathbb{E}}^{G_{\eta}}_k], l_b, u_b)$
      \STATE $\theta_{j,k} \leftarrow \mbox{clip}(\theta_{j,k} + \alpha \lambda_{j,k} \nabla_{\theta_{j,k}} \sum_{k=1}^{K} [\hat{\mathbb{E}}^{f,R}_{j,k} - \hat{\mathbb{E}}^{f,G_{\eta}}_{j,k}], l_b, u_b), \forall j$
      \ENDFOR\label{alg_update_dis_end}

   \ENDFOR\label{alg_train_dis_end}
   \FOR {$s = 0$ {\bfseries to} $s_G$}\label{alg_train_gen_begin}
     \FOR {$i = 1, k = 1$ {\bfseries to} $I, K$}\label{alg_gen_syn_block_in_gen_train_begin}
        \STATE Sample $z_{t_i,t_i+k} \sim \mathcal{P}(z_{t_i,t_i+k})$; Compute $r'_{t_i+k|t_i} = G_{\eta}(z_{t_i,t_i+k}, x_{t_i})$; Compute $f_{j,k}(R'_{t_i+1:t_i+k}, x_{t_i}), \forall j$
      \ENDFOR\label{alg_gen_syn_block_in_gen_train_end}
   \STATE $\eta \leftarrow \eta - \alpha \omega_k \nabla_{\eta} \sum_{k=1}^{K} [\hat{\mathbb{E}}^{G_{\eta}}_k - \hat{\mathbb{E}}^{f,{G_{\eta}}}_{j,k}], \forall j$\label{alg_update_gen}

   \ENDFOR\label{alg_train_gen_end}
   \ENDWHILE

\end{algorithmic}
\end{algorithm}

\section{Theoretical Results}
\label{sec:theory}

In this section, we provide a theoretical characterization of the generalization ability of our algorithm. Previous techniques for generalization bounds for standard GANs training with i.i.d.~data \cite{Arora2017}  have not  considered the overlapping structures or derived quantities in our algorithm. To provide generalization bounds, we greatly extend the previous arguments  to adapt to our case. 
%

\cite{Arora2017} have shown that training results  for GANs that appear successful  may be far from the target distribution in terms of standard metrics, such as Jensen-Shannon (JS) divergence or Wasserstein distance---even though the synthetic data distribution is close to the empirical distribution induced by the samples, it can still be far from the underlying true distribution under those metrics. 
For example, the Wasserstein distance between two empirical distributions $\hat\mu$ and $\hat \nu$, both induced by $m$ training samples, can be $0$, while the distance between the true underlying corresponding distributions $\mu$ and $\nu$ can be larger than a constant unless $m$ is exponentially large w.r.t.~the data dimension, which is usually very high. However, in practice, generalization  occurs with respect to a weaker version of distance, i.e. \textit{neural network distance}, defined in Definition~\ref{def:nnd}. In practice, when training WGAN, we are not optimizing the real Wasserstein distance between synthetic and real data, but  a distance approximated by neural networks:
 \begin{equation}
    \mbox{min}_{\eta}\mbox{max}_{\theta} \mathbb{E}_{r\sim \mathcal{P}_r}[D_{\theta}(r)] - \mathbb{E}_{z\sim \mathcal{P}(z)}[D_{\theta}(G_{\eta}(z))], 
\end{equation}
where $D_\theta$ and $G_\eta$ are instantiated as  neural networks, parameterized by $\theta$ and $\eta$.  Here, $r$ is the real data while $G_\eta(z)$ is the synthetic data with random seed $z$. \cite{Arora2017} 
consider the following, weaker metric:
\begin{definition}[Neural Net Distance for WGAN]\label{def:nnd}
Given a family of neural networks $\{D_\theta:\theta\in\Theta\}$ for a set $\Theta$, for two distributions $\mu$ and $\nu$, the corresponding {\em neural network distance} for  the Wasserstein GAN is defined as,
\begin{align}
{\sD}_\Theta(\mu,\nu)&=\sup_{\theta\in\Theta}\{\mathbb{E}_{x\sim\mu}[D_\theta(x)]-\mathbb{E}_{x\sim\nu}[D_\theta(x)]\}.
\end{align}
\end{definition}

With this,  \cite{Arora2017} build a generalization theory for WGANs under the following generalization property.
\begin{definition}
Let $\mathcal{P}_{\text{data}}$ denote the distribution of non-synthetic data and $\mathcal{P}_{G}$ denotes the generated distribution, and let $\hat{\mathcal{P}}_{\text{data}}$ and $\hat{\mathcal{P}}_{G}$ denote the corresponding empirical versions, the generalization gap for WGAN is defined as 
\begin{align}
|{\sD}_\Theta(\hat{\mathcal{P}}_{\text{data}},\hat{\mathcal{P}}_{G})&-{\sD}_\Theta(\mathcal{P}_{\text{data}},\mathcal{P}_{G})|.
\end{align}
\end{definition}

A natural question in our setting is the following:

\textit{Question: for DAT-CGAN, can we give a generalization property guarantee under the neural network distance?}

To build such a theory for DAT-CGAN, instead of dealing with i.i.d.~data as in \cite{Arora2017}, we need to deal with time series and overlapping block sampling as well as the conditioning information. In this section, we will show how to conquer such issues and provide a theoretical guarantee.
Instead of considering a multi-step multi-loss, it is WLOG to consider the case when $k=K$ and  a single decision-related quantity (the raw data can also be viewed as a decision-related quantity, where the corresponding $f$ is the mapping picking the last element in $R_{t_i+1:t_i+K}$). For multiple but finite  values of $k$, and multiple but finite decision-related quantities, we can  use a uniform bound to obtain the corresponding generalization bounds.
%
Given this, we can simplify notation: let $\hat{\mathcal{P}}_R(I)$ and $\hat{\mathcal{P}}_{G_\eta,Z}(I)$ denote the empirical distribution induced by data set $\{(f(R_{t_i+1:t_i+K}, x_{t_i}), x_{t_i})\}_{i=1}^I$ and $\{(f(R'_{t_i+1:t_i+K}, x_{t_i}), x_{t_i})\}_{i=1}^I$, respectively. Recall 
$$R'_{t_i+1:t_i+K} = ( G_{\eta}(z_{t_i,t_i + 1}, x_{t_i}), \ldots, G_{\eta}(z_{t_i,t_i + K}, x_{t_i})),$$
and define
\begin{align}
&{\sD}_\Theta(\hat{\mathcal{P}}_R(I),\hat{\mathcal{P}}_{G_\eta,Z}(I))= \sup_{ \theta\in\Theta}  [\hat{\mathbb{E}}^{f,R} -  \hat{\mathbb{E}}^{f,G_{\eta}}],\\
&\hat{\mathbb{E}}^{f,R} =(1/I)\sum_{i=1}^I [D_{\theta}(f(R_{t_i+1:t_i+K}, x_{t_i}), x_{t_i})],\notag \\
&\hat{\mathbb{E}}^{f,G_{\eta}} =(1/I)\sum_{i=1}^I[D_{\theta}(f(R'_{t_i+1:t_i+K}, x_{t_i}), x_{t_i})].\notag 
\end{align}

Here, $\Theta$ and $\Xi$ are parameter sets. Before formally stating the theoretical results, we need to understand the convergence point of  ${\sD}_\Theta(\hat{\mathcal{P}}_R(I),\hat{\mathcal{P}}_{G_\eta,Z}(I))$.
Notice that for the surrogate loss, taking expectation with respect to $\mathcal{P}(r_{t+K}|x_t)$, for any realization of $x_t$, i.e. $x_t=c$ for constant vector $c$, we need enough samples for $r_{t+K}$ given $x_t=c$ so that the empirical distribution $\hat{\mathcal{P}}(r_{t+K}|x_t=c)$ can  well represent the ground-truth distribution $\mathcal{P}(r_{t+K}|x_t=c)$. However, in applications, we would not normally have enough samples for any arbitrary value $c$, and especially considering that $x_t$ may be a continuous random vector instead of a categorical one. It is even possible that for all $\{t_i\}_{i=1}^I$, the $\{x_{t_i}\}_{i=1}^I$ values are different from each other. Thus, we need to understand what ${\sD}_\Theta(\hat{\mathcal{P}}_R(I),\hat{\mathcal{P}}_{G_\eta,Z}(I))$ converge to as $I\rightarrow \infty$.
We  show that ${\sD}_\Theta(\hat{\mathcal{P}}_R(I),\hat{\mathcal{P}}_{G_\eta,Z}(I))$  converges to a ``weaker" version for a given $\eta$ under certain conditions, i.e., that it converges to
%
\begin{equation}
{\sD}_\Theta(\mathcal{P}_R,\mathcal{P}_{G_\eta,Z})=\sup_{ \theta\in\Theta} [\mathbb{E}^{f,R} -  \mathbb{E}^{f,G_{\eta}}],
\end{equation}
where $\mathcal{P}_R$ and $\mathcal{P}_{G_\eta,Z}$ are  the distribution of $(f(R_{t+1:t+K}, x_{t}), x_{t})$ and $(f(R'_{t+1:t+K}, x_{t}), x_{t})$, respectively, and 
\begin{align*}
\!\!\!\!\!\!\!\!\mathbb{E}^{f,R} &\!=  \mathbb{E}_{x_t}\!\mathbb{E}_{R_{t+1:t+K} \sim \mathcal{P}(R_{t+1:t+K}|x_t)}\\
&~~~~~~~~\![D_{\theta}(\!f(R_{t+1:t+K}, x_t), \!x_t)],\\
\mathbb{E}^{f,G_{\eta}}& =  \mathbb{E}_{x_t}\mathbb{E}_{Z_{t,t+K}\sim \mathcal{P}(Z_{t,t+K})}\\
&~~~~~~~~[D_{\theta}(f(R'_{t+1:t+K}, x_t), x_t)].
\end{align*}

Compared with the surrogate loss mentioned previouly such as Eq.~(\ref{eq:Er}), there is an extra expectation over $x_t$ in ${\sD}_\Theta(\mathcal{P}_R,\mathcal{P}_{G_\eta,Z})$, which comes from sampling over different $\{x_{t_i}\}$'s. We can view this as an average version of the surrogate losses under different realizations of $x_t$'s.  Now we are ready to state a generalization bound regarding $|{\sD}_\Theta(\hat{\mathcal{P}}_R(I),\hat{\mathcal{P}}_{G_\eta,Z}(I)))-{\sD}_\Theta(\mathcal{P}_R,\mathcal{P}_{G_\eta,Z})|$.
In order to conquer the issues with non-i.i.d. data and overlapping sampling, we  introduce a framework for defining suitable {\em mixing conditions}. This kind of framework is commonly used in time-series analysis~\cite{Bradley2008}.
\smallskip

\noindent\textbf{Mixing condition framework.} Let $X_i\in S$ for some set $S$, and $X=(X_1,,\cdots,X_n)$. We further denote $X^j_i=(X_i,X_{i+1},\cdots,X_j)$ as a random vector for  $1\leq i< j\leq n$.  Correspondingly, we let  $x^j_i=(x_i,x_{i+1},\cdots,x_j)$ be a subsequence for the realization of $X$, i.e. $(x_1,x_2,\cdots,x_n)$. We denote the set $\mathcal{C}=\{\bm{y}\in S^{i-1}, w,w'\in S:\mathbb{P}(X^i_1=\bm{y}w)>0,~\mathbb{P}(X^i_1=\bm{y}w')>0\}$, and write $\bar{\eta}_{i,j}(\{X_i\}_{i=1}^n)=\sup_{\mathcal{C}}\eta_{i,j}(\bm{y},w,w'),$ where $\eta_{i,j}((\{X_i\}_{i=1}^n,\bm{y},w,w')$ denotes
\begin{align}
\mathit{TV}\!\Big(\mathcal{P}(X^n_j|X^i_1&=\bm{y}w),\mathcal{P}(X^n_j|X^i_1=\bm{y}w')\Big).
\end{align}

Here, $\mathit{TV}$ is the {\em  total variation distance}, and $\mathcal{P}(X^n_j|X^i_1=\bm{y}w)$ is the conditional distribution of $X^n_j$, conditioning on $\{X^i_1=\bm{y}w\}$.

\noindent\textbf{Assumptions and implications.} First, we make a number of natural, boundedness assumptions. We assume the time series data are of bounded support, i.e. there exists a universal $B_r$, such that $ \max\{\|r_i\|_\infty,\|r_i\|\}\leq B_{r}$,
%
where the boundedness of $\|r_i\|$ is implied by the boundedness of$\|r_i\|_\infty$ since the dimension of $r_i$ is finite. We also assume boundedness of conditioning information $\{x_t\}_t$, i.e.  there exists a universal $B_x$, such that $\max\{\|x_t\|_\infty,\|x_t\|\}\leq B_{x}.$
For the discriminators $D_\gamma$ and $D_\theta$, where $\theta\in\Theta\subseteq \mathbb{R}^p$,  we assume w.l.o.g.~that $\Theta$ is a subset of unit balls with corresponding dimensions.\footnote{We can always rescale the parameter properly by changing the parameterization as long as $\Theta$ is bounded. The boundedness of $\Theta$ is naturally satisfied since the training algorithm of the Wasserstein GAN requires weight clipping.} Similarly, for the generative model $G_{\eta}$, $\eta\in \Xi$, we  assume $\Xi$ is a subset of unit ball. 

We require $L$-Lipschitzness of $D_\theta$ and $G_{\eta}$ with respect to their parameters, i.e. $\|D_{\theta_1}(x)-D_{\theta_2}(x)\|\leq L \|\theta_1-\theta_2\|$ for any $x$ (similar for $G_{\eta}$), as well as the boundedness of the output range of $G$, i.e. that there exists $\Delta$ such that $\max\{\|G_\eta(x)\|,\|G_\eta(x)\|_\infty\}\leq\Delta$ for any input $x$.
To characterize the mixing conditions, we  assume there exists a universal function $\beta$, such that $\max\{\bar{\eta}_{i,j}(\{(r_i,x_i)\}_{i=1}^T),\bar{\eta}_{i,j}(\{x_i\}_{i=1}^T)\}\leq\beta(|j-i|)),$ 
%
%
and with $\Delta_\beta=\sum_{k=1}^\infty\beta(k)<\infty$, where $\beta$'s are the {\em  mixing coefficients}.
Lastly, and as holds for the Wasserstein GAN, there exists a constant $\tilde{L}$, such that $\|D_\theta(x)-D_\theta(x')\|\leq \tilde{L}\|x-x'\|$ for all $\theta$. 
We first claim the boundedness of the decision-related quantities in DAT-CGAN. We defer the proofs of Lemma~\ref{lem:1} and Theorem~\ref{thm:1} to Appendix.
\begin{lemma}[Boundedness of decision-related quantities]\label{lm:boundedness}
Under the assumptions above, the decision-related quantities we considered are all bounded, where the bounds are universal and only depend on $B_r$. 
\label{lem:1}
\end{lemma}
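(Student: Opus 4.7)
The plan is to bound each of the six decision-related quantities $\hat{u}_{t,k}$, $\hat{\Sigma}_{t,k}$, $\hat{H}_{t,k}$, $w_{t,k}$, $p_{t,k}$, $U_{t,k}$ in turn, propagating the universal bound $B_r$ on asset returns through the algebraic relations that define them.

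First I would handle $\hat{u}_{t,k}$ by unrolling the moving-average recursion. Since $\mbox{MA}_{\zeta}(r_s)=\zeta\,\mbox{MA}_{\zeta}(r_{s-1})+(1-\zeta)r_s$ is a convex combination of previous values, induction gives $\|\hat{u}_{t,k}\|_\infty \leq B_r$ provided the recursion is initialized at zero (or at any vector of $\ell_\infty$ norm at most $B_r$). The same argument applied to the MA of $r_s r_s^\top$ gives $\|\mbox{MA}_{\zeta}(r_s r_s^\top)\|_{\mathrm{op}} \leq B_r^2$, and therefore $\|\hat{\Sigma}_{t,k}\|_{\mathrm{op}} \leq 2B_r^2$ by the triangle inequality. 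A key observation is that $\hat{\Sigma}_{t,k}$ is also positive semidefinite: viewing the moving average as an expectation under a probability measure with exponentially decaying weights on the past, $\hat{\Sigma}_{t,k}$ is exactly a covariance under that measure.

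Next, for the shrinkage inverse $\hat{H}_{t,k}=\bigl((1-\tau)\hat{\Sigma}_{t,k}+\tau I\bigr)^{-1}$, the eigenvalues of the matrix being inverted lie in $[\tau,\,\tau+2(1-\tau)B_r^2]$, so
\[
\frac{1}{\tau+2(1-\tau)B_r^2}\leq \lambda_{\min}(\hat{H}_{t,k})\leq\lambda_{\max}(\hat{H}_{t,k})\leq \frac{1}{\tau}.
\]
Hence $\hat{H}_{t,k}$ is bounded in operator norm and has smallest eigenvalue bounded away from zero by a constant depending only on $B_r$ and the fixed hyperparameter $\tau$. For the portfolio weights, the denominator in the closed form satisfies $\mathbf{1}^\top \hat{H}_{t,k}\mathbf{1}\geq d\,\lambda_{\min}(\hat{H}_{t,k})>0$, so the expression is well-defined, and the numerator is a bounded algebraic combination of $\hat{H}_{t,k}$, $\hat{u}_{t,k}$, $\mathbf{1}$, and the constants $\phi,\tau$. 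This yields $\|w_{t,k}\|\leq C(B_r)$ for a universal constant $C(B_r)$. Finally, $|p_{t,k}|\leq \|w_{t,k}\|\cdot B_r$ and $U_{t,k}=p_{t,k}-\phi p_{t,k}^2$ is bounded because $p_{t,k}$ is.

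The hard part will be verifying positive semidefiniteness of $\hat{\Sigma}_{t,k}$ cleanly — in particular handling the initial condition of the MA recursion so that the implicit weights form a (sub-)probability measure, with any residual discrepancy absorbed into the $\tau I$ shrinkage term — and then ensuring that all final constants are expressed as functions of $B_r$ alone, with the fixed hyperparameters $\tau,\phi,\zeta,d$ treated as constants of the problem. Once PSD-ness is secured, the remainder is elementary matrix-norm algebra together with a few applications of the triangle inequality.
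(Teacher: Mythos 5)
Your proof is correct and follows the same overall strategy as the paper's: propagate the bound $B_r$ through the chain $\hat{u}_{t,k} \to \hat{\Sigma}_{t,k} \to \hat{H}_{t,k} \to w_{t,k} \to p_{t,k} \to U_{t,k}$, with the only nontrivial step being the inverse $\hat{H}_{t,k}$. Where you differ is in how that step is handled. The paper bounds $\hat{H}_{t,K}$ via Cramer's rule, lower-bounding $|\hat{H}^{-1}_{t,K}| = \prod_j ((1-\tau)\tau_j + \tau) \geq \tau^d$ and upper-bounding the cofactors entrywise, and then separately bounds $\hat{w}_{t,K}$ using $\lambda_{\max}(\hat{H}_{t,K}) \leq 1/\tau$ together with a Frobenius-norm lower bound on $|\hat{H}_{t,K} v|$. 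You instead work directly with the spectrum: the eigenvalues of $(1-\tau)\hat{\Sigma}_{t,k} + \tau I$ lie in $[\tau,\, \tau + 2(1-\tau)B_r^2]$, which immediately sandwiches the eigenvalues of $\hat{H}_{t,k}$ and gives $\mathbf{1}^\top \hat{H}_{t,k}\mathbf{1} \geq d\,\lambda_{\min}(\hat{H}_{t,k}) > 0$ for the denominator. This is cleaner and avoids the determinant bookkeeping. Both arguments hinge on $\hat{\Sigma}_{t,k}$ being positive semidefinite; the paper simply asserts that its eigenvalues "are all non-negative," whereas you correctly identify this as the point requiring care and supply the right justification (the exponentially weighted moving average is an expectation under a probability measure, so $\hat{\Sigma}_{t,k}$ is a covariance under that measure, modulo the initialization of the recursion). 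Finally, your observation that the constants depend on $\tau$, $\phi$, $\zeta$, and $d$ as well as $B_r$ matches the paper's own concluding remark, so treating those hyperparameters as fixed constants of the problem is consistent with the intended reading of the lemma.
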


Let $B_f$  denote the bound of the decision-related quantity,  $\max\{\|f(R_{t_i+1:t_i+K},x_{t_i})\|,\|f(R'_{t_i+1:t_i+K},x_{t_i})\|\}\leq B_f$ for all $i$. By
Lemma \ref{lm:boundedness}, we obtain the following generalization bound for $|\mathcal{D}_\Theta(\hat{\mathcal{P}}_R(I),\hat{\mathcal{P}}_{G_\eta,Z}(I)))-\mathcal{D}_\Theta(\mathcal{P}_R,\mathcal{P}_{G_\eta,Z})|$, for each iteration of the training process (referring to each round of the mix-max optimization of CGANs).
\begin{theorem}
\label{thm:1}
Under the assumptions above, suppose $G_{\eta_1},G_{\eta_2},\cdots,G_{\eta_{M}}$ be the $M$ generators in the $M$ iterations of the training, let $B_*=\sqrt{B^2_{f}+B^2_x}(K+\Delta_\beta)$, then
\begin{equation*}
\sup_{j\in[M]}|{D}_\Theta(\hat{\mathcal{P}}_R(I),\hat{\mathcal{P}}_{G_{\eta_j},Z}(I));\eta)-{D}_\Theta(\mathcal{P}_R,\mathcal{P}_{G_{\eta_j},Z})|\leq \varepsilon,
\end{equation*}
with probability of at least 
$$1-C\exp\Big(p\log(\frac{pL}{\varepsilon})\Big)(1+M)\exp\Big(-\frac{I\varepsilon^2}{\tilde{L}^2B^2_*}\Big),$$
for some constant $C>0$.
\end{theorem}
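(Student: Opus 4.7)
The plan is to reduce the uniform deviation to a supremum over an $\varepsilon$-net in the parameter space $\Theta$, use a concentration inequality for bounded functionals of the weakly dependent (mixing) block sequence to control each net point, and then take a union bound both over the net and over the $M$ iterates $\eta_1,\dots,\eta_M$. The shape of the final bound (covering factor $\exp(p\log(pL/\varepsilon))$ times a Hoeffding-like factor $\exp(-I\varepsilon^2/(\tilde{L}^2 B_*^2))$ with $B_*=\sqrt{B_f^2+B_x^2}(K+\Delta_\beta)$) already dictates this three-ingredient recipe: the first factor is the metric entropy of $\Theta$ under the parameter Lipschitz assumption, while the variance proxy $B_*^2$ encodes the per-coordinate data bound $B_f^2+B_x^2$ inflated by the mixing-corrected effective block length $K+\Delta_\beta$.

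First I would fix $\theta\in\Theta$ and a single generator $G_\eta$ and set $Y_i=D_\theta(f(R_{t_i,K},x_{t_i}),x_{t_i})$ and $Y_i'=D_\theta(f(R'_{t_i,K},x_{t_i}),x_{t_i})$. By the $\tilde L$-Lipschitzness of $D_\theta$ on its input and the boundedness claims (Lemma~\ref{lem:1} plus the assumption $\|x_t\|\le B_x$), we have $|Y_i|,|Y_i'|\le \tilde L\sqrt{B_f^2+B_x^2}$. The empirical means $\hat{\mathbb{E}}^{f,R}$ and $\hat{\mathbb{E}}^{f,G_\eta}$ are Lipschitz functions of the underlying blocks $\{(r_{t_i+1},\ldots,r_{t_i+K},x_{t_i})\}_{i=1}^I$, which by the mixing assumption satisfy $\bar\eta_{i,j}\le\beta(|j-i|)$ with $\sum_k\beta(k)=\Delta_\beta<\infty$. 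I would apply a McDiarmid-type concentration inequality for weakly dependent sequences (Kontorovich–Ramanan style), where the bounded-difference constants pick up an extra factor of $K+\Delta_\beta$: $K$ from the length of one block and $\Delta_\beta$ from the tail of the mixing coefficients propagating into neighboring blocks. This yields
\begin{equation*}
\Pr\!\left(\bigl|[\hat{\mathbb{E}}^{f,R}-\hat{\mathbb{E}}^{f,G_\eta}]-[\mathbb{E}^{f,R}-\mathbb{E}^{f,G_\eta}]\bigr|>\varepsilon/2\right)\le 2\exp\!\left(-\frac{c I\varepsilon^2}{\tilde L^2 B_*^2}\right)
\end{equation*}
for an absolute constant $c$, with $B_*=\sqrt{B_f^2+B_x^2}(K+\Delta_\beta)$.

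Next I would discretize $\Theta$. Since $\Theta\subset\mathbb{R}^p$ is a subset of the unit ball, it admits an $(\varepsilon/(4L))$-net $\mathcal{N}$ of cardinality $|\mathcal{N}|\le(C'L/\varepsilon)^p\le \exp(p\log(pL/\varepsilon))$ up to constants. The $L$-Lipschitz assumption on $\theta\mapsto D_\theta$ implies that moving $\theta$ within a net ball changes both $\hat{\mathbb{E}}^{f,R}-\hat{\mathbb{E}}^{f,G_\eta}$ and $\mathbb{E}^{f,R}-\mathbb{E}^{f,G_\eta}$ by at most $\varepsilon/4$. Taking $\sup_\theta$ differs from $\max_{\theta\in\mathcal N}$ by at most $\varepsilon/2$ for each of the two distances, so it suffices to control the empirical–population gap at net points within $\varepsilon/2$. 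A union bound over $\mathcal N$ combined with the pointwise estimate gives, for a single $\eta$, failure probability at most $2|\mathcal{N}|\exp(-cI\varepsilon^2/(\tilde L^2 B_*^2))$. Finally a union bound over $\eta_1,\dots,\eta_M$ inserts the factor $(1+M)$ (the "$1+$" absorbing constants), producing exactly the advertised bound.

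The main obstacle I expect is isolating the correct concentration inequality under the mixing-coefficient framework used by the paper and verifying that the effective bounded-difference constant scales as $\tilde L\sqrt{B_f^2+B_x^2}(K+\Delta_\beta)/I$ rather than a looser scaling. Two subtleties drive this: (i)~the blocks $R_{t_i,K}$ have intrinsic length $K$ and can overlap across $i$, so the martingale-difference bound for each coordinate has to count both the within-block Lipschitz influence (giving $K$) and the between-block tail influence governed by $\sum_k\beta(k)=\Delta_\beta$; (ii)~the synthetic block $R'_{t_i,K}$ is measurable with respect to the conditioning $x_{t_i}$ and the independent noise $Z_{t_i,K}$, so one must verify that conditioning on the $x$-sequence preserves the mixing structure and that the independent noise contributes no additional dependence, which is where the assumption $\bar\eta_{i,j}(\{x_i\})\le\beta(|j-i|)$ becomes essential. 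Once these two points are handled, the covering and union-bound steps are routine.
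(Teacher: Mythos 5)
Your proposal follows essentially the same route as the paper's proof: a Kontorovich--Ramanan concentration inequality for the mixing block sequence (with the overlapping-block analysis yielding the $K+\Delta_\beta$ inflation of the bounded-difference constant, handled separately for the real blocks and for the noise-driven synthetic blocks), an $O(\varepsilon/L)$-net over $\Theta$ with the $\exp(p\log(pL/\varepsilon))$ covering factor, and a final union bound over the $M$ generator iterates. The only step you elide is the paper's technical detour of discretizing the discriminator's output range $[L^f_R,U^f_R]$ via an $\varepsilon$-net (the map $g_{P_\varepsilon}$), which is needed only because the cited concentration result is stated for finite alphabets; this does not change the substance of the argument.
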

Theorem~\ref{thm:1} provides, whether for raw data or one of the decision-related quantities, that the distribution on non-synthetic data is close to the generated distribution at every iteration in the training process.
As with \cite{Arora2017}, we  obtain an exponential tail bound, but in our case, this has a constant that also involves the mixing coefficient as well as the sampling block size.

\section{DAT-CGAN for Portfolio Choice}

We apply the DAT-CGAN to a portfolio choice problem, where an end user is an investor who wants to understand the properties of a portfolio strategy. 
A good simulator should not only generate synthetic asset return data but also 
support the calculation of high-fidelity, decision-related quantities that are relevant for portfolio choice. 

Specifically, we  assume the investor  formulates a {\em mean-variance portfolio optimization problem} in choosing the portfolio weights. 
The investor wants to invest across a number of assets, considering the portfolio return and  the portfolio risk.
Let $r_{t+k+1}$ denote the {\em asset return vector} at time $t+k+1$, for look-ahead step $k+1$. Let  $x_t$ denote the  conditioning variables  at time $t$. We use $w_{t+k|t}$ to denote the portfolio weights decided at time $t+k$, and traded on at time $t+k+1$.
For non-synthetic data, the  portfolio optimization problem at time $t+k$ can be written as:
\begin{align*}
\max_{\quad w_{t+k|t}^\intercal\mathbf{1} = 1} &\quad  w_{t+k|t}^\intercal \hat{u}_{t+k|t} - \phi \cdot w_{t+k|t}^\intercal \hat{\Sigma}_{t+k|t} w_{t+k|t},
\end{align*}
where $\phi>0$ is the risk preference parameter, and  with
 estimated mean and  co-variance  of asset returns,  $\hat{u}_{t+k|t}$ and
 $\hat{\Sigma}_{t+k|t}$, respectively.
\begin{figure}[h]
\centering
 \includegraphics[width = 1\textwidth]{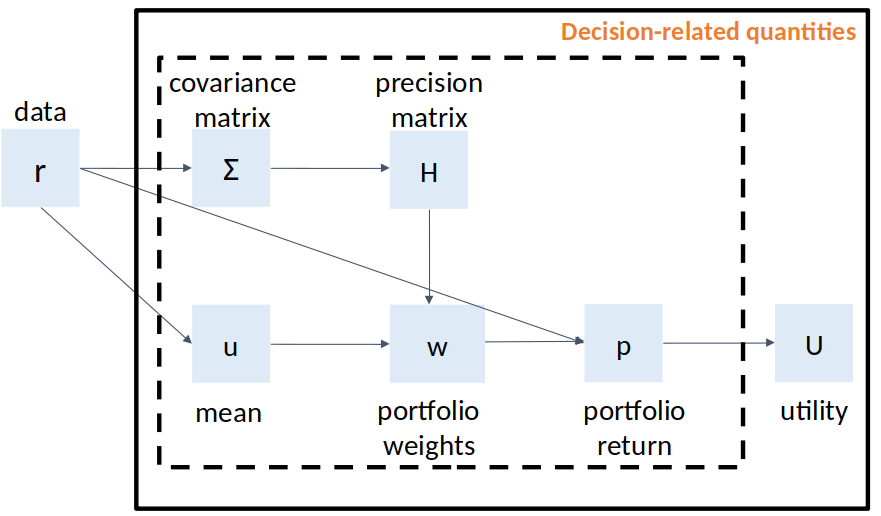}
 \caption{Decision-related quantities in the portfolio selection problem. \label{fig_dec_rel_quant}}
\end{figure}

These estimators are defined on non-synthetic asset returns as, $\hat{u}_{t+k|t}= f_{u,k}(R_{t+1:t+k}, x_t) = \mbox{MA}_{\zeta}(r_{t+k})$ and $\hat{\Sigma}_{t+k|t}  = f_{\Sigma,k}(R_{t+1:t+k}, x_t) =\mbox{MA}_{\zeta}(r_{t+k}r^\intercal_{t+k}) - \hat{u}_{t+k|t}^2.$
%
%
%
Here, $\mbox{MA}_{\zeta}(r_{t+k}) = \zeta\cdot \mbox{MA}_{\zeta}(r_{t+k-1}) + (1-\zeta)\cdot r_{t+k}$ is a moving average operator, and $\zeta>0$  a smoothing parameter.
The analytical solution to the investment problem is,
\begin{align}
w_{t+k|t} 
&=\frac{2\hat{H}_{t+k|t}}{\phi}(\hat{u}_{t+k|t} -\frac{\mathbf{1}^\intercal \hat{H}_{t+k|t}\hat{u}_{t+k|t}\mathbf{1} - 2\phi\mathbf{1}}{\mathbf{1}^\intercal \hat{H}_{t+k|t}\mathbf{1}}),
\label{portfolio_policy}
\end{align}
where $\hat{H}_{t+k|t}$ is  the    estimated precision matrix ($\hat{\Sigma}^{-1}_{t+k|t}$ ) of asset returns. $\hat{H}_{t+k|t} = ((1 - \tau) \hat{\Sigma}_{t+k|t} + \tau \Lambda)^{-1}$ using the shrinkage method~\cite{Copas1983}, where $\Lambda$ is the identity matrix  
and $\tau>0$ is a shrinkage parameter.
%
The investor is interested in the {\em realized portfolio return}, $p_{t+k+1|t} = w_{t+k|t}^\intercal r_{t+k+1|t}$, and the realized {\em utility of the portfolio return} given the risk preference,
defined as $U_{t+k+1|t} = p_{t+k+1|t} -\phi p_{t+k+1|t}^2$.
\if 0

   $p_{t+k|t}$ 
for the
realized portfolio return, and
 $U_{t+k|t}$ 
 for the utility 
 of the corresponding portfolio.

\fi
We give the relationship between the various
decision-related quantities
in Figure~\ref{fig_dec_rel_quant}.
These decision-related quantities are generated based on  conditioning variables
that reflect  market conditions. 
We need to take derivatives through the portfolio optimization problem during training
and handle this with the chain rule,
 making use of the closed-formed solution~\eqref{portfolio_policy}.
For the synthetic data, the entire workflow is the same as with the non-synthetic data,
except that the asset returns, $r'_{t+k|t}$, are
 generated by a GAN, where $z_{t,t+k}$ is the random seed. 
Similar to the non-synthetic data, we define $\hat{u}'_{t+k|t} = f_{u,k}(R'_{t+1:t+k}, x_t)$, $\hat{\Sigma}'_{t+k|t} = f_{\Sigma,k}(R'_{t+1:t+k}, x_t)$, $\hat{H}'_{t+k|t}$, $w'_{t+k|t}$, $p'_{t+k+1|t}$, and $U'_{t+k+1|t}$. 

\begin{figure*}[h!]
\centering
\begin{subfigure}[b]{0.32\textwidth}
\centering
\includegraphics[width = \textwidth]{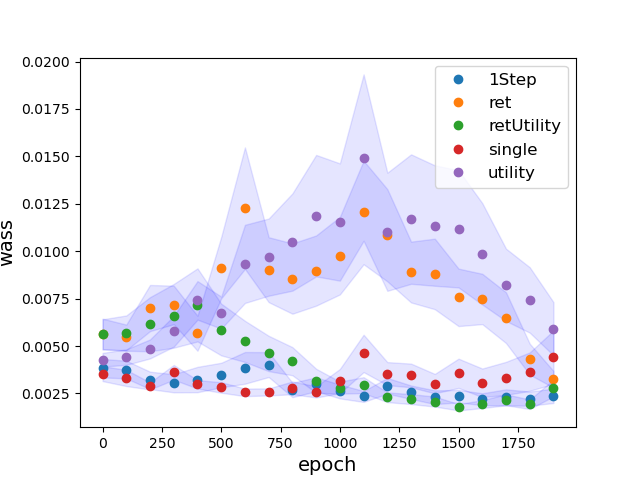}
\caption{Initial step asset returns~~~~~~~~~ \label{fig_sim_ret1}}
\end{subfigure}
\hfill
\begin{subfigure}[b]{0.32\textwidth}
\centering
\includegraphics[width = 1\textwidth]{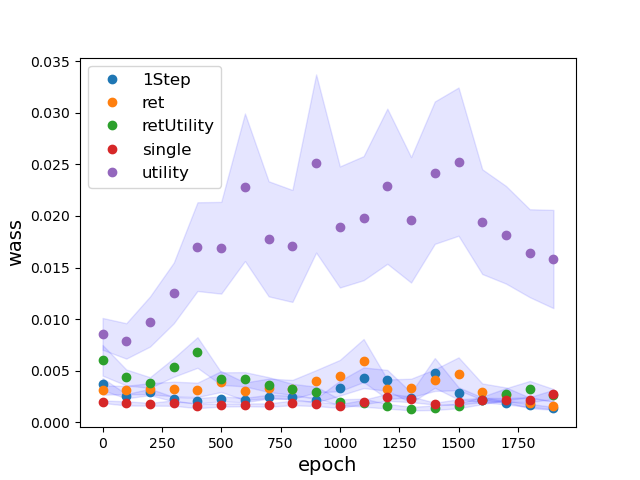}
\caption{Initial step precision matrix \label{fig_sim_invCov1}}
\end{subfigure}
\hfill
\begin{subfigure}[b]{0.32\textwidth}
\centering
\includegraphics[width = 1\textwidth]{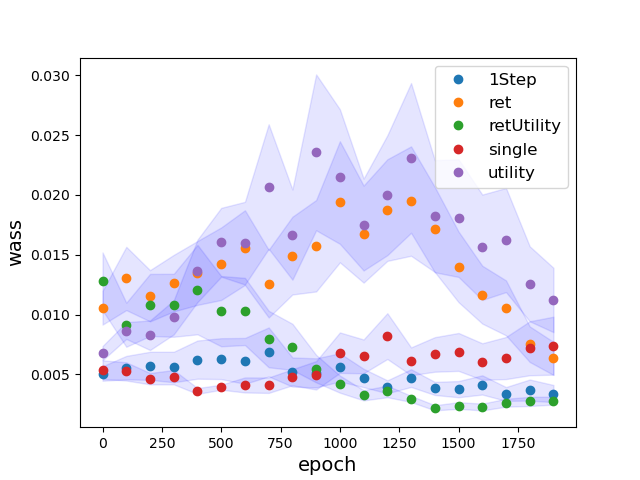}
\caption{Initial step portfolio weights \label{fig_sim_portW1}}
\end{subfigure}
\vskip\baselineskip
\begin{subfigure}[b]{0.32\textwidth}
\centering
\includegraphics[width = \textwidth]{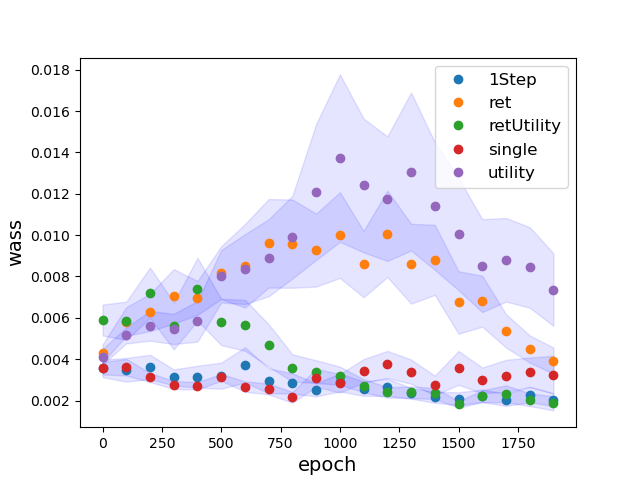}
\caption{Final step asset returns~~~~~~~~~ \label{fig_sim_ret4}}
\end{subfigure}
\hfill
\begin{subfigure}[b]{0.32\textwidth}
\centering
\includegraphics[width = 1\textwidth]{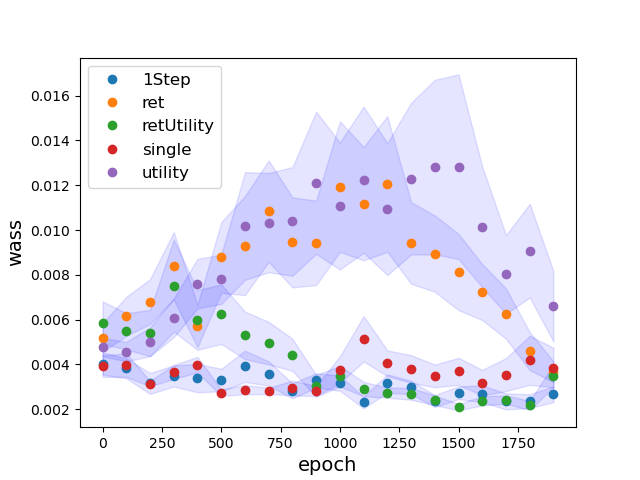}
\caption{Final step precision matrix~~~~ \label{fig_sim_invCov3}}
\end{subfigure}
\hfill
\begin{subfigure}[b]{0.32\textwidth}
\centering
\includegraphics[width = 1\textwidth]{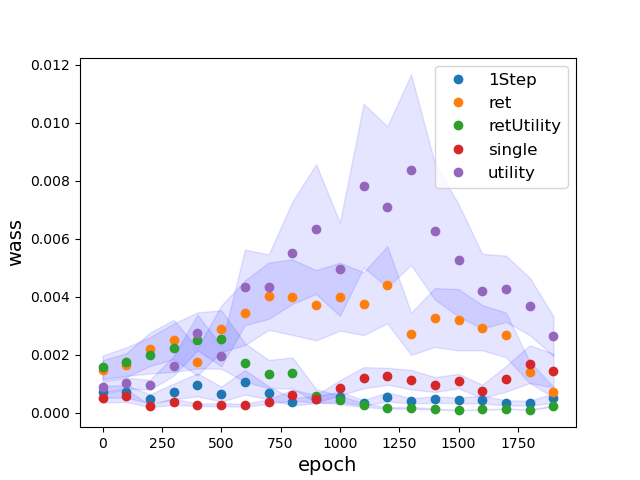}
\caption{Final step portfolio weights \label{fig_sim_portW3}}
\end{subfigure}
\caption{\label{fig_sim1}
Wasserstein Distance between non-synthetic and synthetic data for simulated time series. \ref{fig_sim_ret1} and \ref{fig_sim_ret4}  asset returns; \ref{fig_sim_invCov1} and \ref{fig_sim_invCov3}  estimated precision matrix; \ref{fig_sim_portW1} and \ref{fig_sim_portW3}  portfolio weights.  An epoch is one full pass of the data, and the shaded areas are confidence bands computed over 5 runs. }
\end{figure*}

\section{Experimental Results}

We study two different  environments. The first is a simulated environment, and the second is real and based on a basket of  ETF time series. {\em To avoid ambiguity, in this section we use the phrase ``simulated'' to refer to the simulated ground-truth model, and ``synthetic" to refer to the data generated by the DAT-CGAN and other baselines (whether in a simulated or real environment).}

\noindent\textbf{Experimental setup.} We study a $K=4$ future steps generation problem. Assume the risk-preference parameter of an investor is $\phi = 1$, and with shrinkage parameter $\tau = 0.01$ when  estimating the precision matrix (to avoid issues with a degenerate co-variance matrix). 
We use the DAT-CGAN simulator with asset returns as raw data and with the realized utility of the portfolio as the decision-related quantity. Thus, we  call this the {\em Ret-Utility-GAN}. 
We adopt utility as the decision-related quantity since it comes at the end of the decision chain and controls all the decision-related quantities;  in particular, the derivative of this quantity also involves the derivative of earlier quantities, via the chain rule, and thus controls multiple quantities of interest.
%
 We find in our experiments that this provides  good fidelity for  the  synthetic distribution  given different risk preference parameters (See Figure \ref{fig_sim1} and \ref{fig_sim2} in Section \ref{experiment appendix}). For the conditioning variables for each  asset, we use technical indicators computed via the moving average operator of asset returns in the past few weeks. (See Section \ref{conditioning variables appendix}).

We compare our method with the following approaches:

$\bullet$ ({\em Ret-GAN}) A GAN defined with  the  asset return loss, which is a standard model used~\cite{Koshiyama2019,Zhou2018}. This GAN generates  synthetic raw asset returns $R'_{t+1:t+K}$ for each $t$, and the training process uses the sum of $K$ Wasserstein losses, one for each look-ahead step.

$\bullet$ ({\em 1Step-GAN})  A 1-step version of the Ret-Utility GAN: a GAN with a 1-step look ahead asset return and utility (this is similar to~\cite{Li2019}, but with an additional decision-aware loss). This GAN generates  synthetic data $R'_{t+1:t+1}$ (i.e. $r'_{t+1|t}$) and synthetic derived quantity $U'_{t+1|t}$, i.e., utility. The training process  uses the sum of two Wasserstein losses, defined on $R'_{t+1:t+1}$ and  $U'_{t+1|t}$. 

$\bullet$ ({\em Single-GAN}) A GAN with a single Wasserstein
loss defined on a vector of $2K$ quantities coming from stacking the asset returns, $R'_{t+1:t+K}$, and utility quantities $U'_{t+k|t}$, for each $k$ and $t$.

$\bullet$ ({\em Utility-GAN}) A GAN with only the  utility loss (i.e., no loss on the asset returns). 
This GAN generates the synthetic derived quantities, $U'_{t+k|t}$, for each $k$ and $t$, i.e., the utility quantities.  The training process uses the sum of $K$ Wasserstein losses, one for each look-ahead step.  
\begin{figure*}[h!]
\centering

\begin{subfigure}[b]{0.32\textwidth}
\centering
\includegraphics[width = \textwidth]{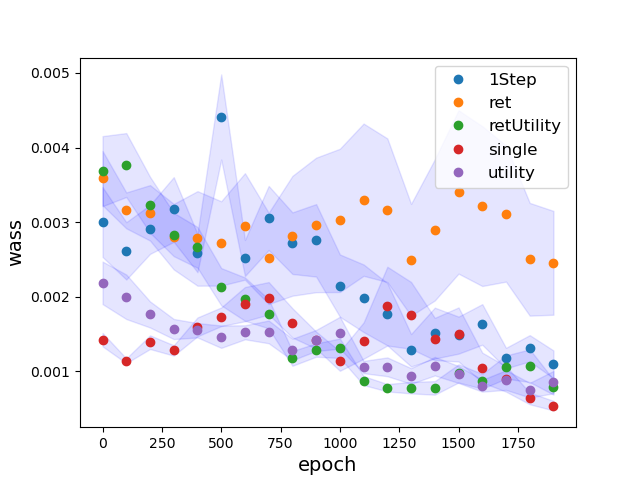}
\caption{Initial step asset returns~~~~~~~~~ \label{fig_real_ret1}}
\end{subfigure}
\hfill
\begin{subfigure}[b]{0.32\textwidth}
\centering
\includegraphics[width = 1\textwidth]{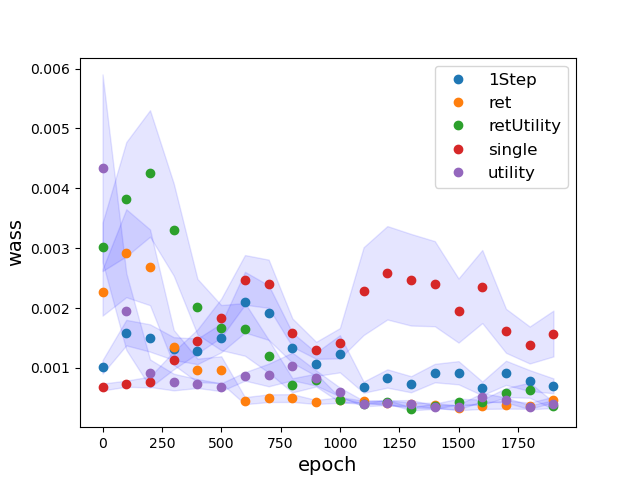}
\caption{Initial step precision matrix \label{fig_real_invCov1}}
\end{subfigure}
\hfill  
\begin{subfigure}[b]{0.32\textwidth}
\centering
\includegraphics[width = 1\textwidth]{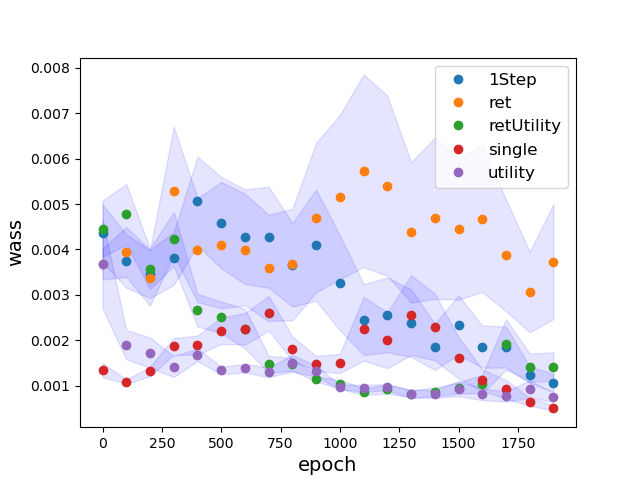}
\caption{Initial step portfolio weights \label{fig_real_portW1}}
\end{subfigure}
\vskip\baselineskip
\begin{subfigure}[b]{0.32\textwidth}
\centering
\includegraphics[width = \textwidth]{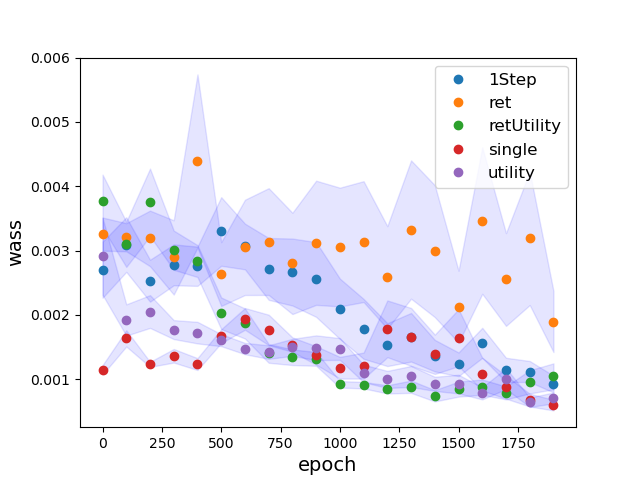}
\caption{Final step asset returns~~~~~~~~~ \label{fig_real_ret4}}
\end{subfigure}
\hfill
\begin{subfigure}[b]{0.32\textwidth}
\centering
\includegraphics[width = 1\textwidth]{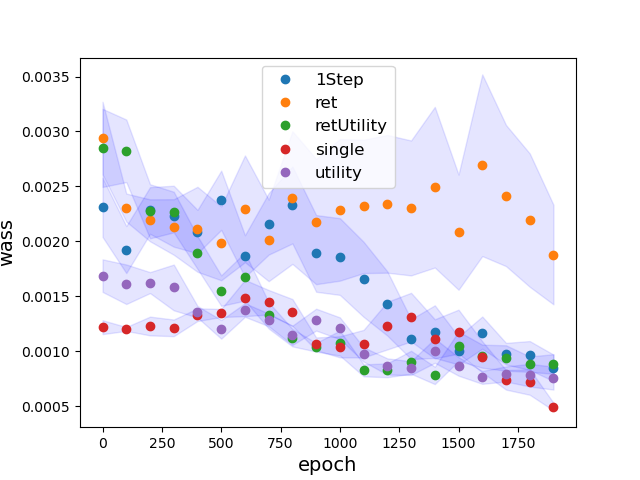}
\caption{Final step precision matrix~~~\label{fig_real_invCov3}}
\end{subfigure}
\hfill
\begin{subfigure}[b]{0.32\textwidth}
\centering
\includegraphics[width = 1\textwidth]{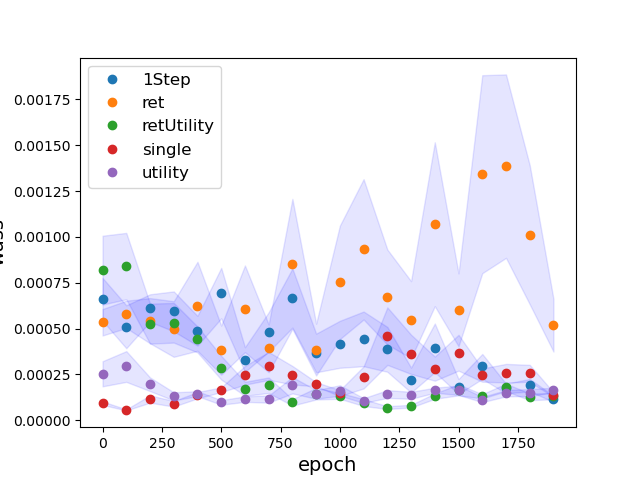}
\caption{Final step portfolio weights \label{fig_real_portW3}}
\end{subfigure}
\caption{Wasserstein Distances between non-synthetic and synthetic data for four U.S. ETFs. \ref{fig_real_ret1} and \ref{fig_real_ret4} for asset returns; \ref{fig_real_invCov1} and \ref{fig_real_invCov3}  for estimated precision matrix; \ref{fig_real_portW1} and \ref{fig_real_portW3} for portfolio weights. An epoch is one full pass of the data, and the shaded areas are confidence bands computed over 5 runs.  \label{fig_real1}}
\end{figure*}

For the generator, we use a two-layer feed-forward neural network for each asset. The output are the asset returns, and these are  used to compute any decision-related quantities. Where necessary, we make use of multiple discriminators, each  corresponding to a particular quantity (e.g., raw data, or a decision-related quantity). For each discriminator, the architecture is a two-layer feed-forward neural network(See Section \ref{appendix generator network architecture} and \ref{appendix discriminator network architecture}). For evaluation, we calculate the Wasserstein distances between the raw asset returns, the estimated precision matrix, and the portfolio weights (the decision variables) and their respective synthetic counterparts. We do this for 
quantities that correspond to each of 
 the initial and  final lookahead steps
 (i.e., steps 1 and 3 for the estimated precision matrix and portfolio weights, and steps 1 and 4 for the raw data).

\noindent\textbf{Results: Simulated time series.}
\label{sim experiment}
We first present results on a simulated time series. The data-generating process is given by  $r_{t+1} = b_0 \cdot r_t + \sum^{4}_{i=1} b_i \cdot \mbox{MA}_{\zeta_i}(r_t) + \epsilon$, where $r_{t}$ is the asset return vector, $\mbox{MA}_{\zeta_i}(r_t)$ the moving average operator, $\zeta_i>0$ the smoothing parameter, $b_i$ the coefficient, and $\epsilon$ the noise. We use a {\em multivariate t-distribution} to model the noise, with parameters $\mu$, $\sigma$ (See Section \ref{appendix sim data generating process}) and d.o.f.,~$\nu = 100$. It simulates the heavy-tail behavior of  asset returns~\cite{Bollerslev1987}.

Figures~\ref{fig_sim_ret1} to~\ref{fig_sim_portW3} confirm that the Ret-Utility-GAN has the best performance in terms of minimizing the Wasserstein distance for each of asset returns, estimated precision matrix, and portfolio weights. It performs (1) better than Ret-GAN, confirming  that introducing the decision-related quantity, utility, provides useful moderation on the distribution of synthetic raw data; (2) better than the Utility-GAN, which shows that including the asset return loss also helps; (3) better than the  Single-GAN, which shows that imposing loss for each quantity is more effective than a single loss  on stacked quantities; and 
(4) better than the 1step-GAN, which shows that the Ret-Utility-GAN is effective in addressing exposure bias. We can also observe that Ret-GAN doesn't perform as well as the Ret-Utility-GAN even on the raw asset return data. Our hypothesis is that, by introducing additional loss, Ret-Utility-GAN provides more information for the gradient during the training process, leading to more effective training stability and better generalization on evaluation data. 

\noindent\textbf{Results: Real ETF time series.}
%
We also present results on a real ETF time series, where we use weekly price data for each of four U.S.~ETFs\footnote{the {\em Material (XLB)}, {\em Energy (XLE)}, {\em Financial (XLF)}, and {\em Industrial (XLI)} ETFs.}
from 1999 to 2016. The data includes the end-of-day price  for each ETF. The entire dataset has more than 3500 data points (17 years $\times$ 52 weeks $\times$ 4 ETFs). We divide the data into a training set with data from 1999--2006, and an evaluation set with data from 2007--2016. As a reference,~\cite{Yoon2019} make use of 4,000 data points for their experiment, thus our sample size is comparable. We generate future weekly returns for each of $K=4$ future weeks (1 month).

 Figures~\ref{fig_real_ret1} through \ref{fig_real_portW3} show that the performance of  Ret-Utility-GAN is much better than Ret-GAN in regard to the Wasserstein loss 
in regard to each of
 the asset returns, precision matrix, and  portfolio weights. Ret-Utility-GAN is also better than the Single-GAN in regard to the Wasserstein loss on the initial step precision matrix, and better than the 1Step-GAN in terms of its  training stability. In this experiment, Utility-GAN, which also makes use of a decision-related quantity, also performs well. We again observe that Ret-GAN does not perform as well as the Ret-Utility-GAN, even on the raw asset return data.

\section{Conclusion}
We proposed DAT-CGAN,
 a novel, decision-aware time series conditional generative adversarial network for generating  time-series data. The  method incorporates decision-related quantities into 
 a multi-loss structure, avoids  exposure bias  by aligning look-ahead steps during training and testing, and alleviates problems with data scarcity through an overlapped-block sampling scheme. Moreover, we characterize the generalization properties of DAT-CGANs for generating the raw data as well as decision-related quantities.
In an application to portfolio selection, we demonstrated better generative quality for decision-related quantities, such as estimated precision matrix and portfolio weights, than other strong, GAN-based baselines.

\newpage

\bibliographystyle{alpha}
\bibliography{DAT-CGAN.bib}

\newpage
\appendix
\section{Omitted Proofs}

\subsection{Proof of Lemma 1}

We first show the decision-related quantities  in DAT-CGAN are all bounded under our assumptions. Here by boundedness of a vector or matrix, we mean the largest entry of the vector or matrix is bounded. Similar to the theory part in the main paper, we also simplify notations, such as only considering fixed $k=K-1$ for estimated mean $\hat{u}_{t+K-1|t}$, estimated covariance matrix  $\hat{\Sigma}_{t+K-1|t}$, estimated precision matrix $\hat{H}_{t+K-1|t}$ and portfolio weights $\hat{w}_{t+K-1|t}$; only considering fixed $k=K$ for realized portfolio return $p_{t+K|t}$ and utility $U_{t+K|t}$. We also omit some constants. That will affect the validity of the application of our theory to the algorithm.

The quantities of interest are:
\begin{itemize}
\item[a.] $\hat{u}_{t+K-1|t}=\mbox{MA}_{\zeta}(r_{t+K-1})$, where $\mbox{MA}_{\zeta}(r_{t+K-1}) = \zeta\mbox{MA}_{\zeta}(r_{t+K-2}) + (1-\zeta) r_{t+K-1}$ and $0 < \zeta < 1$;
\item[b.]$\hat{\Sigma}_{t+K-1|t}=\mbox{MA}_{\zeta}(r_{t+K-1}r^{\intercal}_{t+K-1}) - \hat{u}_{t+K-1|t}^2$;
\item[c.]$\hat{H}_{t+K-1|t} = ((1 - \tau) \hat{\Sigma}_{t+K-1|t} + \tau \Lambda)^{-1}$, where $0 < \tau < 1$ and $\Lambda$ is the identity matrix;
\item[d.]$$\hat{w}_{t+K-1|t}=h(\hat{u}_{t+K-1|t}, \hat{H}_{t+K-1|t})=\frac{2\hat{H}_{t+K-1|t}}{\phi}\left(\hat{u}_{t+K-1|t}-\frac{{\bm 1}^\intercal\hat{H}_{t+K-1|t}\hat{u}_{t+K-1|t} \bm 1-2\phi \bm 1}{{\bm 1}^\intercal \hat{H}_{t+K-1|t} \bm 1}\right);$$
\item[e.] $p_{t+K|t} = \hat{w}_{t+K-1|t}^\intercal r_{t+K}$ 
\item[f.] $U(\hat{w}_{t+K-1|t}^\intercal r_{t+K})=\hat{w}_{t+K-1|t}^\intercal r_{t+K}-\phi (\hat{w}_{t+K-1|t}^\intercal r_{t+K})^2$ 
\end{itemize}

a, b are obviously bounded since $r_t$ is bounded, $\forall t$. e, f are also bounded once we prove c, d are bounded.
\begin{itemize}
\item[I.] For $\hat{H}_{t+K-1|t}$, we can obtain its boundedness by simply realizing the determinant of $\hat{H}_{t+K-1|t}$ is lower bounded, and that the determinant of the adjugate matrices are all upper bounded, and applying Cramer's rule.

The lower bound of $|\hat{H}^{-1}_{t+K-1|t}|$ can be obtained by
\begin{align*}
|\hat{H}^{-1}_{t+K-1|t}|&=\Pi_{j=1}^d((1-\tau)\tau_i+\tau)\geq \tau^d,
\end{align*}
where $\tau_i$'s are eigenvalues of matrix $\hat{\Sigma}_{t+K-1|t}=\mbox{MA}_{\zeta}(r_{t+K-1}r^{\intercal}_{t+K-1}) - \hat{u}_{t+K-1|t}^2$, which are all non-negative.
The upper bound of  the determinant  of the $(d-1)\times (d-1)$ adjugate matrices are clearly upper bounded since every entry of them is bounded.

\item[II.] Next, we consider 
$$\hat{w}_{t+K-1|t}=h(\hat{u}_{t+K-1|t}, \hat{H}_{t+K-1|t})=\frac{2\hat{H}_{t+K-1|t}}{\phi}\left(\hat{u}_{t+K-1|t}-\frac{{\bm 1}^\intercal\hat{H}_{t+K-1|t}\hat{u}_{t+K-1|t} \bm 1-2\phi \bm 1}{{\bm 1}^\intercal \hat{H}_{t+K-1|t} \bm 1}\right);$$
Notice $\tau_{\max}(\hat{H}_{t+K-1|t})\leq 1/\tau_{\min}(\hat{H}^{-1}_{t+K-1|t})\leq 1/\tau$, for any vector $v$
$$|\hat{H}_{t+K-1|t}v|\leq \frac{\|v\|}{\tau}.$$
Besides, $\tau_{\max}(\hat{H}^{-1}_{t+K-1|t})\leq \|\hat{H}^{-1}_{t+K-1|t}\|_F$. The Frobenius norm of matrix $\hat{H}^{-1}_{t+K-1|t}$ is bounded since every entry in the matrix is bounded. Thus, for any $v$,
$$|\hat{H}_{t+K-1|t}v|\geq \frac{\|v\|}{\|\hat{H}^{-1}_{t+K-1|t}\|_F}\geq \frac{\|v\|}{\sqrt{(1-\tau)^2 B^2_r+\tau^2} d}.$$
Then, we know $\hat{w}_{t+K-1|t}$ is bounded. Notice all the bounds mentioned above  can be obtained by only using $B_r$, $\tau$, $d$, and $\phi$.

\end{itemize}

\subsection{Proof of Theorem 1}

For convenience of statement, we  restate the mixing condition framework and corresponding lemma. 

\paragraph{Restatement of Result in \cite{kontorovich2008concentration}} We  consider a simplified variant of Theorem 1.1 in \cite{kontorovich2008concentration}. Let $X_i\in S$, where $S$ is a finite set, and $X=(X_1,X_2,\cdots,X_n)$. We further denote $X^j_i=(X_i,X_{i+1},\cdots,X_j)$ as a random vector for  $1\leq i< j\leq n$.  Correspondingly, we let  $x^j_i=(x_i,x_{i+1},\cdots,x_j)$ be a subsequence for $(x_1,x_2,\cdots,x_n)$. And let 
$$\bar{\eta}_{i,j}=\sup_{y^{i-1}_1\in S^{i-1}, w,w'\in S,~\bP(X^i_1=Y^{i-1}w)>0,~\bP(X^i_1=Y^{i-1}w')>0}\eta_{i,j}(y^{i-1}_1,w,w'),$$
where 
$$\eta_{i,j}(y^{i-1}_1,w,w')=TV\Big(\cD(X^n_j|X^i_1=y^{i-1}_1w),\cD(X^n_j|X^i_1=y^{i-1}_1w')\Big).$$
Here $TV$ is the total variation distance, and $\cD(X^n_j|X^i_1=y^i_1w)$ is the conditional distribution of $\cD(X^n_j|X^i_1=y^i_1w)$, conditioning on $\{X^i_1=y^i_1w\}$.

Let $H_n$ be am $n\times n$ upper triangular matrix, defined by
$$(H_n)_{ij}=\left\{\begin{matrix}
 1&i=j   \\ 
 \bar{\eta}_{i,j}&i<j   \\ 
 0&   o.w.
\end{matrix}\right.$$
Then, 
$$\|H_n\|_\infty=\max_{1\leq i\leq n}J_{n,i},$$
where 
$$J_{n,i}=1+\bar{\eta}_{i,i+1}+\cdots+\bar{\eta}_{i,n},$$
and $J_{n,n}=1$.

\begin{lemma}[Variant of Result in \cite{kontorovich2008concentration}]\label{thm:konto}
Let $h$ be a $L_h$-Lipschitz function (with respect to the Hamming distance) on $S^n$ for some constant $L_f>0$. Then, for any $t>0$,
$$\bP(|h(X)-Eh(x)|\geq t)\leq 2\exp\Big(-\frac{t^2}{2nL^2_h\|H_n\|^2_\infty}\Big).$$

\end{lemma}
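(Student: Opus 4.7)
The plan is a standard three-step recipe for uniform concentration: (i) pointwise concentration for each fixed $(\theta,\eta_j)$ via Lemma~\ref{thm:konto}, (ii) a covering/$\varepsilon$-net argument to upgrade to a uniform bound over $\theta\in\Theta$, and (iii) a union bound over the $M$ generator iterates. Throughout, the role of Lemma~\ref{lm:boundedness} is to guarantee $\max_i\max\{\|f(R_{t_i,K},x_{t_i})\|,\|f(R'_{t_i,K},x_{t_i})\|\}\leq B_f$, which is used both when verifying the Kontorovich hypothesis and when bounding the Hamming-Lipschitz constant. Throughout, I use the trivial inequality $|\sup_\theta A_\theta-\sup_\theta B_\theta|\leq \sup_\theta|A_\theta-B_\theta|$ to pass from the difference of sups in the statement to a sup of differences.

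\textbf{Step 1: pointwise concentration.} Fix $\theta\in\Theta$ and $j\in[M]$, and write $\psi_{\theta,j}:=\hat{\mathbb{E}}^{f,R}(\theta)-\hat{\mathbb{E}}^{f,G_{\eta_j}}(\theta)$, viewed as a function of the base time series $\{(r_t,x_t)\}$ and the iid Gaussian noise $\{z_{t_i,k}\}$. By the $\tilde{L}$-Lipschitzness of $D_\theta$ in its input and the norm bound $\sqrt{B_f^2+B_x^2}$ on $(f(R_{t_i,K},x_{t_i}),x_{t_i})$, every summand is bounded by $\tilde{L}\sqrt{B_f^2+B_x^2}$. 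Because each base-sequence entry $(r_t,x_t)$ appears in at most $K$ of the $I$ overlapping blocks, a single-coordinate change perturbs $\psi_{\theta,j}$ by at most $L_h\lesssim K\tilde{L}\sqrt{B_f^2+B_x^2}/I$. The mixing assumption yields $\|H_n\|_\infty\leq 1+\Delta_\beta$. Plugging these into Lemma~\ref{thm:konto} and collapsing the product $nL_h^2\|H_n\|_\infty^2$ into the combined scale $(K+\Delta_\beta)^2$ produces
\begin{equation*}
\Pr(|\psi_{\theta,j}-\mathbb{E}\psi_{\theta,j}|\geq\varepsilon)\leq 2\exp\!\Big(-\frac{I\varepsilon^2}{\tilde{L}^2 B_*^2}\Big).
\end{equation*}
The iid noise factor inside $\hat{\mathbb{E}}^{f,G_{\eta_j}}$ is handled in parallel via the same lemma with $\Delta_\beta=0$, which gives a strictly weaker and hence dominated bound.

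\textbf{Step 2: covering in $\theta$ and union bound in $j$.} Because $\Theta$ lies in the unit ball of $\mathbb{R}^p$, it admits an $\varepsilon'$-net $\Theta_0$ with $|\Theta_0|\leq(3/\varepsilon')^p$. For any $\theta\in\Theta$ pick its nearest $\theta_0\in\Theta_0$; by $L$-Lipschitzness of $D_\theta$ in $\theta$, both $\psi_{\theta,j}$ and $\mathbb{E}\psi_{\theta,j}$ change by at most $2L\varepsilon'$ between $\theta$ and $\theta_0$. Choosing $\varepsilon'\asymp \varepsilon/(pL)$ and invoking Step~1 at level $\varepsilon/2$ for each pair $(\theta_0,\eta_j)\in\Theta_0\times\{\eta_1,\ldots,\eta_M\}$, one union bound gives
\begin{equation*}
\Pr\!\Big(\sup_{j\in[M]}\sup_{\theta\in\Theta}|\psi_{\theta,j}-\mathbb{E}\psi_{\theta,j}|\geq\varepsilon\Big)\leq C\exp\!\Big(p\log\!\tfrac{pL}{\varepsilon}\Big)(1+M)\exp\!\Big(-\tfrac{I\varepsilon^2}{\tilde{L}^2B_*^2}\Big),
\end{equation*}
which is exactly the claimed bound (the ``$+1$'' absorbs the constant $2$ from the two-sided tail and the $\varepsilon\mapsto\varepsilon/2$ rescaling).

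\textbf{Main obstacle.} The principal technical difficulty is bookkeeping for the Hamming-Lipschitz constant under overlapping blocks, so that $K$ (from overlap) and $\Delta_\beta$ (from mixing) enter \emph{additively} inside $B_*=\sqrt{B_f^2+B_x^2}(K+\Delta_\beta)$ rather than multiplicatively as $K(1+\Delta_\beta)$. Obtaining the tighter additive form requires selecting the underlying variables in Kontorovich's framework carefully so that the product $nL_h^2\|H_n\|_\infty^2$ collapses to the stated scale, exploiting that the effective number of independent degrees of freedom is $I$ (one per block), that $n\lesssim I+K$ after deduplicating overlapping times, and that the contribution of the mixing tail $\Delta_\beta$ behaves linearly in the relevant regime. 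A secondary care-point is ensuring the generator-side noise (although iid) is still handled within the same Kontorovich umbrella so that no second, separate concentration step is needed; this is why Lemma~\ref{lm:boundedness} is applied uniformly to both $f(R_{t_i,K},x_{t_i})$ and $f(R'_{t_i,K},x_{t_i})$.
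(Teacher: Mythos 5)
There is a fundamental mismatch here: the statement you were asked to prove is Lemma~\ref{thm:konto} itself --- the Kontorovich--Ramanan--type concentration inequality $\bP(|f(X)-\mathbb{E}f(X)|\geq t)\leq 2\exp\big(-t^2/(2nL_h^2\|H_n\|_\infty^2)\big)$ for a Hamming--Lipschitz function of a weakly dependent sequence --- but your proposal never proves this inequality. Instead, you \emph{invoke} Lemma~\ref{thm:konto} in your Step~1 as a black box and proceed to derive the downstream generalization bound (the paper's Theorem~\ref{thm:1}). As a proof of the assigned statement this is circular: the pointwise concentration you need in Step~1 \emph{is} the statement to be established. Your three-step recipe (pointwise concentration, $\varepsilon$-net over $\Theta$, union bound over the $M$ generators) does track the paper's appendix proof of the final theorem reasonably well, but it is a proof of the wrong result.

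For the record, the paper itself does not reprove this lemma --- it imports it directly as a variant of Theorem~1.1 of Kontorovich and Ramanan (2008). A self-contained proof would go roughly as follows: form the Doob martingale $V_i=\mathbb{E}[f(X)\mid X_1^i]-\mathbb{E}[f(X)\mid X_1^{i-1}]$, show that each increment satisfies $\|V_i\|_\infty\leq L_h\,J_{n,i}=L_h\big(1+\bar{\eta}_{i,i+1}+\cdots+\bar{\eta}_{i,n}\big)$ by a coupling argument that charges the change of coordinate $i$ to the total-variation distances $\bar{\eta}_{i,j}$ between the conditional laws of the future given two different values at position $i$, and then apply Azuma--Hoeffding, noting $\sum_i J_{n,i}^2\leq n\max_i J_{n,i}^2=n\|H_n\|_\infty^2$. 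None of these ingredients (martingale decomposition, coupling, Azuma) appear in your write-up, so the gap is not a technical slip but a missing argument for the core inequality.
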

\bigskip

Recall under our assumptions, we have for all $\theta\in \Theta$,
$$D_\theta\big(f(R_{t_i+1:t_i+K},x_t),x_t\big)\in[L^{f}_R,U^{f}_R],$$
such that $U^{f}_R-L^{f}_R\leq 2\tilde{L}\sqrt{B^2_{f}+B^2_x}.$
Since Lemma 2 needs finite support for $S$, we take a detour here in order to extend it to an interval support. We define a $\varepsilon$-net for the interval $[L^{f}_R,U^{f}_R]$, i.e. $P_\varepsilon=\{p_0,p_1,\cdots,p_W\}$ such that $p_0=L^{f}_R,p_1=L^{f}_R+\varepsilon,\cdots p_W=L^{f}_R+W\varepsilon$ and $|U^{f}_R-p_W|\leq \varepsilon$. We define a function $g_{P_\varepsilon}(\cdot)$ on $[L^{f}_R,U^{f}_R]$ such that, for any $x\in [L^{f}_R,U^{f}_R]$, we have
$$g_{P_\varepsilon}(x)=\argmin_{p_i\in P_\varepsilon}|x-p_i|.$$

Without loss of generality, we can assume, for all $j\in\{0,\cdots,W\}$,
$$\bP(g_{P_\varepsilon}[D_\theta\big(f(R_{t_i+1:t_i+k},x_t),x_t\big)]=p_j)>0,$$ 
otherwise, we can remove the corresponding $p_j$ and form a new net.
From the mixing condition on $\{(r_{i+1},x_i)\}_{i=1}^T$, we can obtain the mixing condition on overlapping blocks.

\begin{lemma}
Under our assumptions, denote $\tilde{R}^x_{t_i+1:t_i+K}=((r_{t_i+1},x_{t_i}),\cdots,(r_{t_i+K},x_{t_i+K-1}))$, we have 
$$\bar{\eta}_{i,j}(\{\tilde{R}^x_{t_m+1:t_m+K}\}_{m=1}^I)\leq\left\{\begin{matrix}
 1&|i-j|\leq K-1  \\ 
\bar{\eta}_{i+K-1,j}((\{(r_{i+1},x_i)\}_{i=1}^T)\leq \beta(|j-i-K+1|)&o.w.   
\end{matrix}\right.$$
\label{lem:app3}
\end{lemma}
\begin{proof}
This can be immediately obtained once we know for any output range $O$, that  $(\tilde{R}^x_{i,i+K},\tilde{R}^x_{i,i+K})\in O$ is equivalent to $((r_{i+1},x_{i}),\cdots,(r_{i+K-1},x_{i+K-2}),(r_{i+K},x_{i+K-1}))\in O'$, for some output range $O'$ and $|t_i-t_{j}|\geq |i-j|$.
\end{proof}

\begin{lemma}
Under our assumptions, denote  $\tilde{Z}^x_{t_i,t_i+K}=((z_{t_i,t_i + 1},x_{t_i}),\cdots,(z_{t_i,t_i + K},x_{t_i}))$, we have
$$\bar{\eta}_{i,j}(\{\tilde{Z}^x_{t_m,K}\}_{m=1}^I)\leq\left\{\begin{matrix}
 1&|i-j|\leq K-1  \\ 
\bar{\eta}_{i+K-1,j}((\{(z_i,x_i)\}_{i=1}^T)\leq \beta(|j-i-K+1|)&o.w.   
\end{matrix}\right.$$\end{lemma}
\begin{proof}
Notice unlike $R_{t_i+1:t_i+K}$, each $\{Z_{t_i,t_i+K}\}_i$ are mutually independent, and elements in each $Z_{t_i,t_i+K}$ are also mutually independent. Thus, the mixing coefficients depend entirely on $x$. Similarly as  with Lemma~\ref{lem:app3}, we  immediately obtain the result.
\end{proof}

Then, we can use Lemma~\ref{thm:konto} to obtain the following theorem.

\begin{theorem}
With overlapping block sampling,  then for any $\varepsilon>0$, and any $\theta\in\Theta$, we have 
$$\bP(\Big|\frac{1}{I}\sum_{i=1}^{I} D_\theta\big(f(R_{t_i+1:t_i+K},x_{t_i}),x_{t_i}\big)-\bE D_\theta\big(f(R_{t_i+1:t_i+K},x_{t_i}),x_{t_i}\big)\Big|\geq 3\varepsilon)\leq 2\exp\Big(-\frac{I\varepsilon^2}{4\tilde{L}^2(B^2_{f}+B^2_x)(K+\Delta_\beta)^2}\Big).$$
\end{theorem}
\begin{proof}
By Lemma 3, and combined with the assumption that $\sum_i\beta(|i|)\leq \Delta_\beta$, with simple calculation, we can obtain for $\{\tilde{R}^x_{t_m+1:t_m+K}\}_{m=1}^I$
$$\|H_I\|_\infty\leq K+\Delta_\beta.$$
Besides, we know 
$$\frac{1}{n}\sum_{i=1}^n g_{P_\varepsilon}[D_\theta\big((f(R_{t_i+1:t_i+K},x_{t_i}),x_{t_i}\big)]$$
is $2\tilde{L}\sqrt{B^2_{f}+B^2_x}$--Lipschitz continuous with respect to the Hamming distance. Then, by Lemma \ref{thm:konto}, we have 
$$\bP(\Big|\frac{1}{I}\sum_{i=1}^{I} g_{P_\varepsilon}[D_\theta\big(f(R_{t_i+1:t_i+K},x_{t_i}),x_{t_i}\big)]-\bE g_{P_\varepsilon}[D_\theta\big(f(R_{t_i+1:t_i+K},x_{t_i}),x_{t_i}\big)]\Big|\geq \varepsilon)\leq 2\exp\Big(-\frac{I\varepsilon^2}{4\tilde{L}^2(B^2_{f}+B^2_x)(K+\Delta_\beta)^2}\Big).$$
Next, it is easy to see for any $x\in[L^{f}_R,U^{f}_R]$, we have
$$\Big| g_{P_\varepsilon}[x]-x\Big|\leq \varepsilon.$$
Thus, we can obtain 
$$\bP(\Big|\frac{1}{I}\sum_{i=1}^{I} D_\theta\big(f(R_{t_i+1:t_i+K},x_{t_i}),x_{t_i}\big)-\bE D_\theta\big(f(R_{t_i+1:t_i+K},x_{t_i}),x_{t_i}\big)\Big|\geq 3\varepsilon)\leq 2\exp\Big(-\frac{I\varepsilon^2}{4\tilde{L}^2(B^2_{f}+B^2_x)(K+\Delta_\beta)^2}\Big).$$

\end{proof}

Similarly, we have:
\begin{theorem}
With overlapping block sampling, then for any $\varepsilon>0$, and any $\theta\in\Theta$, any $\eta\in\Xi$, we have
$$\bP(\Big|\frac{1}{I}\sum_{i=1}^{I} D_\theta\big(G_\eta(Z_{t_i,t_i+K},x_{t_i}),x_{t_i}\big)-\bE D_\theta\big(G_\eta(Z_{t_i,t_i+K},x_{t_i}),x_{t_i}\big)\Big|\geq \varepsilon)\leq 2\exp\Big(-\frac{I\varepsilon^2}{4\tilde{L}^2(B^2_{f}+B^2_x)(K+\Delta_\beta)^2}\Big),$$
\end{theorem}

Now, let us consider the generalization bound under the neural-network distance for a fixed generator.
\begin{lemma}
Under the assumptions in subsection ``Assumptions and implications", there exists a universal constant $C$ such that 
\begin{equation*}
\left|\mathscr{D}_\Theta\Big(\hat{\cP}_R(I),\hat{\cP}_{G_\eta,Z}(I)\Big)-\mathscr{D}_\Theta\Big(\cP_{R},\cP_{G_\eta,Z}\Big)\right|\leq \varepsilon,
\end{equation*}
with probability
$$1-C\exp\Big(p\log(\frac{pL}{\varepsilon})\Big)\left[\exp\Big(-\frac{I\varepsilon^2}{\tilde{L}^2(B^2_{f}+B^2_x)(K+\Delta_\beta)^2}\Big)\right].$$
\end{lemma}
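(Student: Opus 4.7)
The plan is to reduce the lemma to a uniform deviation bound over $\theta\in\Theta$, combine a covering argument on $\Theta$ with the two pointwise concentration inequalities already established (Theorems~3 and~4 in the appendix, for the non-synthetic and synthetic empirical expectations respectively), and then rearrange constants so that the probability takes the stated form.

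First I would do the standard "sup of a difference" reduction. Writing
\[
A(\theta)=\hat{\mathbb{E}}^{f,R}-\mathbb{E}^{f,R},\qquad B(\theta,\eta)=\hat{\mathbb{E}}^{f,G_\eta}-\mathbb{E}^{f,G_\eta},
\]
the triangle/sup inequality gives
\[
\bigl|\mathscr{D}_\Theta(\hat{\cP}_R(I),\hat{\cP}_{G_\eta,Z}(I))-\mathscr{D}_\Theta(\cP_R,\cP_{G_\eta,Z})\bigr|
\le \sup_{\theta\in\Theta}|A(\theta)|+\sup_{\theta\in\Theta}|B(\theta,\eta)|.
\]
Thus it suffices to ensure that each of these suprema is at most $\varepsilon/2$ with high probability. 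For a \emph{fixed} $\theta$ this is exactly what the two pointwise concentration theorems (derived from the Kontorovich-style Lemma~\ref{thm:konto} and Lemma~\ref{lem:app3}) deliver, with tail $2\exp(-cI\varepsilon^2/[\tilde L^2(B_f^2+B_x^2)(K+\Delta_\beta)^2])$ for a small absolute constant $c$.

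Next I would build an $\varepsilon_0$-net $\mathcal{N}\subseteq\Theta$ in $\|\cdot\|$. Because $\Theta$ is assumed to sit inside the unit ball of $\mathbb{R}^p$, a standard volume comparison gives $|\mathcal{N}|\le (3/\varepsilon_0)^p$. For any $\theta\in\Theta$ pick $\theta'\in\mathcal{N}$ with $\|\theta-\theta'\|\le\varepsilon_0$; using the $L$-Lipschitzness of $D_\theta$ in $\theta$ (plus the fact that $f(R_{t,K},x_t)$ and $f(R'_{t,K},x_t)$ lie in a bounded set, by Lemma~\ref{lm:boundedness}, so the displacement in $D_\theta$ is uniformly controlled),
\[
|A(\theta)-A(\theta')|\le 2L\varepsilon_0,\qquad |B(\theta,\eta)-B(\theta',\eta)|\le 2L\varepsilon_0.
\]
Choosing $\varepsilon_0$ of order $\varepsilon/(pL)$ makes the discretization error negligible compared with $\varepsilon$, and yields the $p\log(pL/\varepsilon)$ factor that appears inside the exponent of the stated probability. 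A union bound over the $(3/\varepsilon_0)^p$ net points, applied to both the $A$-tail and the $B$-tail via Theorems~3 and~4, then gives the uniform bound
\[
\Pr\!\left(\sup_{\theta\in\Theta}|A(\theta)|+\sup_{\theta\in\Theta}|B(\theta,\eta)|>\varepsilon\right)
\le C\exp\!\bigl(p\log(pL/\varepsilon)\bigr)\exp\!\Bigl(-\tfrac{I\varepsilon^2}{\tilde L^2(B_f^2+B_x^2)(K+\Delta_\beta)^2}\Bigr),
\]
for an absolute constant $C$, which is exactly what the lemma asserts.

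The only genuinely delicate step is choosing $\varepsilon_0$ and absorbing the several constants (the $2L$ from Lipschitzness, the factor $2$ from splitting $A$ and $B$, the $3^p$ covering constant, and the constant $c$ inside the exponent) into a single universal $C$ and into the $p\log(pL/\varepsilon)$ entropy term without breaking the rate. This is routine but has to be done carefully so that the $\varepsilon^2$ factor and the $(K+\Delta_\beta)^2$ factor from the mixing lemma are preserved exactly. The main conceptual obstacle, the non-i.i.d. and overlapping-block nature of the samples, has already been handled upstream in Theorems~3 and~4 via the $\|H_I\|_\infty\le K+\Delta_\beta$ bound on the Kontorovich triangular matrix, so at this stage no further mixing argument is required, only a clean covering/union-bound combination.
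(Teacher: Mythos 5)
Your proposal is correct and follows essentially the same route as the paper's proof: the same two pointwise concentration results for the overlapping-block empirical averages, the same $O(\varepsilon/L)$-net over $\Theta$ with a union bound yielding the $p\log(pL/\varepsilon)$ entropy factor, and the same final comparison of the two suprema (the paper phrases this via the optimal discriminator $\theta^*$ and a one-sided inequality in each direction, which is equivalent to your $|\sup_\theta F-\sup_\theta G|\leq\sup_\theta|F-G|$ reduction). No substantive differences.
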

\begin{proof}
Recall for  a fixed $\theta$, we have the following two conditions:
$$\bP(\Big|\frac{1}{I}\sum_{i=1}^{I} D_\theta\big(f(R_{t_i+1:t_i+K},x_{t_i}),x_{t_i}\big)-\bE D_\theta\big(f(R_{t_i+1:t_i+K},x_{t_i}),x_{t_i}\big)\Big|\geq \varepsilon/4)\leq 2\exp\Big(-\frac{I\varepsilon^2}{576\tilde{L}^2(B^2_{f}+B^2_x)(K+\Delta_\beta)^2}\Big)$$
$$\bP(\Big|\frac{1}{I}\sum_{i=1}^{I} D_\theta\big(G_\eta(Z_{t_i,t_i+K},x_{t_i}),x_{t_i}\big)-\bE D_\theta\big(G_\eta(Z_{t_i,t_i+K},x_{t_i}),x_{t_i}\big)\Big|\geq \varepsilon/4)\leq 2\exp\Big(-\frac{I\varepsilon^2}{576\tilde{L}^2(B^2_{f}+B^2_x)(K+\Delta_\beta)^2}\Big)$$

Let $\cN_\Theta$ be an $\varepsilon/8L$-net of $\Theta$, which is a standard construction satisfying $\log|\cN_\Theta|\leq O(p\log(pL/\varepsilon))$, then, by the uniform bound, we can obtain
\begin{align*}
    \bP(\sup_{\theta\in\Theta}\Big|\frac{1}{I}\sum_{i=1}^{I} D_\theta\big(f(R_{t_i+1:t_i+K},x_{t_i}),x_{t_i}\big)&-\bE D_\theta\big(f(R_{t_i+1:t_i+K},x_{t_i}),x_{t_i}\big)\Big|\geq \varepsilon/2)\\ 
    & \leq 2|\cN_\Theta|\exp\Big(-\frac{I\varepsilon^2}{576\tilde{L}^2(B^2_{f}+B^2_x)(K+\Delta_\beta)^2}\Big)
    \end{align*}

\begin{align*}
\bP(\sup_{\theta\in\Theta}\Big|\frac{1}{I}\sum_{i=1}^{I} D_\theta\big(G_\eta(Z_{t_i,t_i+K},x_{t_i}),x_{t_i}\big)&-\bE D_\theta\big(G_\eta(Z_{t_i,t_i+K},x_{t_i}),x_{t_i}\big)\Big|\geq \varepsilon/2)\\
&\leq  2|\cN_\Theta|\exp\Big(-\frac{I\varepsilon^2}{576\tilde{L}^2(B^2_{f}+B^2_x)(K+\Delta_\beta)^2}\Big)
\end{align*}

Let us denote by $D_{\theta*}$  the optimal discriminator of $\mathbb{E}^{f,R} -  \mathbb{E}^{f,G_{\eta}}$. It is easy to see 
\begin{align*}
\mathscr{D}_\Theta\Big(\hat{\cP}_R(I),\hat{\cP}_{G_\eta,Z}(I)\Big)&\geq \Big|\frac{1}{I}\sum_{i=1}^{I} D_{\theta*}\big(G_\eta(Z_{t_i,t_i+K},x_{t_i}),x_{t_i}\big)-\frac{1}{I}\sum_{i=1}^{I} D_{\theta*}\big(f(R_{t_i+1:t_i+K},x_{t_i}),x_{t_i}\big)\Big|\\\
&\geq \sD_\Theta\Big(\cP_{R},\cP_{G_\eta,Z}\Big)-\sup_{\theta\in\Theta}\Big|\frac{1}{I}\sum_{i=1}^{I} D_\theta\big(f(R_{t_i+1:t_i+K},x_{t_i}),x_{t_i}\big)-\bE D_\theta\big(f(R_{t_i+1:t_i+K},x_{t_i}),x_{t_i}\big)\Big|\\
&-\sup_{\theta\in\Theta}\Big|\frac{1}{I}\sum_{i=1}^{I} D_\theta\big(G_\eta(Z_{t_i,t_i+K},x_{t_i}),x_{t_i}\big)-\bE D_\theta\big(G_\eta(Z_{t_i,t_i+K},x_{t_i}),x_{t_i}\big)\Big|\\
&\geq  \mathscr{D}_\Theta\Big(\cP_{R},\cP_{G_\eta,Z}\Big)-\varepsilon.
\end{align*}
The other direction is similar.
\end{proof}

Thus, if we further apply the uniform bound to the generator part, we can obtain the following theorem.
\begin{theorem}
Under the assumptions in subsection ``Assumptions and implications", let $G_{\eta_1},G_{\eta_2},\cdots,G_{\eta_{M}}$ denote the  generators in each of the $M$ iterations of the training procedure, and let $B_*=\sqrt{B^2_{f}+B^2_x}(K+\Delta_\beta)$, then
\begin{equation*}
\sup_{j\in[M]}\left|\sD_\Theta\Big(\hat{\cP}_R(I),\hat{\cP}_{G_\eta,Z}(I)\Big)-\sD_\Theta\Big(\cP_{R},\cP_{G_\eta,Z}\Big)\right|\leq \varepsilon,
\end{equation*}
with probability at least 
$$1-C\exp\Big(p\log(\frac{pL}{\varepsilon})\Big)(1+M)\exp\Big(-\frac{I\varepsilon^2}{\tilde{L}^2B^2_*}\Big),$$
for some universal constant $C>0$.
\end{theorem}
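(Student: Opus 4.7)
The plan is to adapt the covering-number style generalization argument used for WGANs in the i.i.d.\ setting to our setting, which has three new features: (i) time-series data obeying the stated mixing condition, (ii) overlapping length-$K$ block sampling, and (iii) conditioning variables $x_t$. The overall strategy is to (1) fix both $\eta$ and $\theta$ and obtain concentration of the empirical averages $\hat{\mathbb{E}}^{f,R}$ and $\hat{\mathbb{E}}^{f,G_\eta}$ around $\mathbb{E}^{f,R}$ and $\mathbb{E}^{f,G_\eta}$; (2) pass to $\sup_{\theta\in\Theta}$ via a covering of $\Theta$ and an $L$-Lipschitz argument; (3) pass to $\sup_{j\in[M]}$ by a final union bound over generators.

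For step (1), I would replace Hoeffding with a Kontorovich-type concentration inequality for dependent sequences. The key input is an upper-triangular matrix $H_I$ built from the mixing coefficients $\bar{\eta}_{i,j}$ of the \emph{block-sampled} sequence $\{\tilde{R}^x_{t_i,K}\}_{i=1}^I$. Using the block-mixing lemmas in the supplement, $\bar{\eta}_{i,j}\leq 1$ when $|i-j|\leq K-1$ and $\bar{\eta}_{i,j}\leq \beta(|j-i-K+1|)$ otherwise, so every row sum of $H_I$ is bounded by $K+\Delta_\beta$, yielding $\|H_I\|_\infty\leq K+\Delta_\beta$. This is where the factor $(K+\Delta_\beta)$ in $B_*$ enters. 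Since the Kontorovich inequality requires a finite alphabet and Hamming-Lipschitzness, I would discretize the bounded range $[L^f_R,U^f_R]$ of $D_\theta(f(\cdot),\cdot)$ via an $\varepsilon$-net $P_\varepsilon$ and project using $g_{P_\varepsilon}$; Lipschitzness of $D_\theta$ in its arguments (with constant $\tilde L$) together with Lemma 1 and the boundedness of $x_t$ gives $U^f_R-L^f_R\leq 2\tilde L\sqrt{B_f^2+B_x^2}$, and the discretization error is absorbed by a triple-$\varepsilon$ accounting. The synthetic side is handled identically, using the parallel mixing bound for $\{\tilde{Z}^x_{t_i,K}\}_i$ (whose coefficients inherit entirely from $\{x_t\}$, since the $z_{t_i,k}$ are mutually independent).

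For step (2), I would use the standard covering bound: since $\Theta$ sits in the unit ball of $\mathbb{R}^p$ and $D_\theta$ is $L$-Lipschitz in $\theta$, an $\varepsilon/(8L)$-net $\mathcal{N}_\Theta$ satisfies $\log|\mathcal{N}_\Theta|\leq O(p\log(pL/\varepsilon))$. A union bound over $\mathcal{N}_\Theta$ then controls $\sup_{\theta\in\Theta}|\hat{\mathbb{E}}^{f,R}-\mathbb{E}^{f,R}|$ and $\sup_{\theta\in\Theta}|\hat{\mathbb{E}}^{f,G_\eta}-\mathbb{E}^{f,G_\eta}|$ simultaneously, and the classical ``plug in the optimal $\theta^\ast$'' sandwich upgrades this to the single-generator conclusion $|D_\Theta(\hat{\mathcal{P}}_R(I),\hat{\mathcal{P}}_{G_\eta,Z}(I))-D_\Theta(\mathcal{P}_R,\mathcal{P}_{G_\eta,Z})|\leq\varepsilon$. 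For step (3), I would union-bound once more over $\eta_1,\ldots,\eta_M$; because the data-side empirical average does not depend on $\eta$, only the generator-side contributes a factor $M$, explaining the overall $(1+M)$ factor in the stated probability.

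The main obstacle is step (1): reconciling a concentration inequality stated for finite-alphabet sequences and Hamming-Lipschitz functionals with our continuous-valued, parametric discriminator outputs, while simultaneously tracking how overlapping $K$-blocks inflate mixing coefficients. The net-plus-Lipschitz reduction is routine, but the bookkeeping---showing that only the first $K-1$ off-diagonals of $H_I$ are forced to $1$ by the overlap, and that all later entries inherit the summable decay $\beta(\cdot)$ from the underlying series---is where the problem-specific work lies and is what lets $\Delta_\beta$ appear additively, rather than as a worse $K\Delta_\beta$ product, in $B_*$.
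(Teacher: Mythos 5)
Your proposal follows essentially the same route as the paper's proof: a Kontorovich-type concentration inequality for mixing sequences applied to the overlapping block sequence (with the block-mixing lemmas giving $\|H_I\|_\infty\leq K+\Delta_\beta$), a finite-alphabet reduction via an $\varepsilon$-net $P_\varepsilon$ on the bounded discriminator output range, an $\varepsilon/8L$-net of $\Theta$ with the optimal-discriminator sandwich for the single-generator bound, and a final union bound over the $M$ generators yielding the $(1+M)$ factor. The approach and all key intermediate quantities match the paper's argument.
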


\newpage
\section{Additional Illustration for Experimental Results}

\subsection{Parameters for Financial Portfolio Choice and Simulated Time Series}
\label{appendix sim data generating process}
For non-synthetic data, $\hat{u}_{t+k|t}$ and $\hat{\Sigma}_{t+k|t}$ are estimators for the mean and co-variance of the  asset return at time $t+k$, defined as
\begin{align*}
   \!\! \hat{u}_{t+k|t}\!\! & = f_{u,k}(R_{t+1:t+k}, x_t) = \mbox{MA}_{\zeta}(r_{t+k})\\
\!\! \hat{\Sigma}_{t+k|t} \!\!& = f_{\Sigma,k}(R_{t+1:t+k}, x_t) \!=\! \mbox{MA}_{\zeta}(r_{t+k}r^\intercal_{t+k}) - \hat{u}_{t+k|t}^2,
\end{align*}
where $\mbox{MA}_{\zeta}(r_{t+k}) = \zeta\mbox{MA}_{\zeta}(r_{t+k-1}) + (1-\zeta) r_{t+k}$ with $\zeta = 0.74$.

For the synthetic data, the entire workflow is the same as with the non-synthetic data. The asset return $r'_{t+k|t}$ is generated by a GAN, where $z_{t,t+k}$ is the random seed. Similar to the non-synthetic data, we define 
\begin{align*}
   \!\! \hat{u}'_{t+k|t}\!\! & = f_{u,k}(R'_{t+1:t+k}, x_t) = \mbox{MA}_{\zeta}(r'_{t+k|t})\\
\!\! \hat{\Sigma}'_{t+k|t} \!\!& = f_{\Sigma,k}(R'_{t+1:t+k}, x_t) \!=\! \mbox{MA}_{\zeta}({r'}_{t+k|t}{r'}^\intercal_{t+k|t}) - \hat{u'}_{t+k|t}^2,
\end{align*}
where $\mbox{MA}_{\zeta}(r'_{t+1|t}) = \zeta\mbox{MA}_{\zeta}(r_t) + (1-\zeta) r'_{t+1|t}$.

The data-generating process for the simulated time series is given by  $r_{t+1} = b_0 r_t + \sum^{4}_{i=1} b_i \mbox{MA}_{\zeta_i}(r_t) + \epsilon$,
where $r_{t}$ is the asset return vector,  
$\mbox{MA}_{\zeta_i}(r_t)$ moving average operator, $\zeta_i$ smoothing parameter, $b_i$ coefficient, and $\epsilon$  noise. We set $\zeta_1 = 0.55, \zeta_2 = 0.74, \zeta_3 = 0.86, \zeta_4 = 0.92$, and $b_0 = 0.3, b_1 = 0.1, b_2 = 0.2, b_3 = 0.1$, and $b_4 = 0.1$.

We use a {\em multivariate t-distribution} to model the noise, with location parameter $\mu = [0, 0, 0, 0]^\intercal$,  shape matrix $\Sigma=$  $[1, 0.6, 0, 0; 0.6, 1, 0.6, 0; 0, 0.6, 1, 0.6; 0, 0, 0.6, 1]$,  and d.o.f.,~$\nu = 100$

\subsection{Conditioning Variables}
\label{conditioning variables appendix}
We use five features as the conditional variables for each asset: the  {\em asset return of the last week}, and four moving-average-based features, computed by taking the average of asset returns in the past few weeks. The moving average operators are defined as $\mbox{MA}_{\psi_i}(r_t)=\psi_i\mbox{MA}_{\psi_i}(r_{t-1}) + (1-\psi_i) r_t$, where $0 < \psi_i < 1, 1 \leq i \leq 4$ the smoothing parameter and $r_t$ are the raw asset returns. We use five handcrafted features to encode memory information over the time because our sample size is very limited for real data and we don't want the model capacity being too large thus over-fitting easily. 

\subsection{Training Details}
The total look ahead step is $K=4$. The learning rates are $\alpha = 1e-5$, $\omega_k = \lambda_{j,k} = 0.8^k$, where $1\leq k\leq K$. The number of inner loop iterations for the discriminators and the generator are $s_D = 1$ and $s_G = 5$. The clipping parameters are $l_b = -0.5, u_b = 0.5$. We use $N=2e5$ training steps ($2000$ training epochs) for the training process and the length of the time series is $T=3500$. The batch size is $I = 32$. 

We train on an Azure GPU standard NV6 instance, which has one Tesla M60 GPU.

\subsection{Generator Network Architecture}
\label{appendix generator network architecture}
For the generator, the neural network has dimension $(5+32)\times 16 \times 1$,  with  5 input nodes   based on the hand-crafted conditioning variables and 32 input nodes to provide random seeds.

The neural network architecture of the generator is shown in Figure \ref{fig_gen}. $x_{t,i}$, for $1\leq i \leq 5$, denotes the feature (state) variable input at time $t$ for the network. These are the  asset returns of the last day and four rolling-average based features, computed by taking the average of asset returns in the past few days mentioned in the ``Experimental setup" section and Section \ref{conditioning variables appendix}. Inputs $z_{t,t+k,i}$, for $1\leq k\leq K$, and $1\leq i \leq 32$, are random  seeds. The $h_{t+k|t,i}$ units are hidden ReLU nodes, and the output units, $r'_{t+k|t}$, provide the synthetic asset returns.

After obtaining $r'_{t+k|t}$, we  follow the recipe in the    ``DAT-CGAN for Financial Portfolio Choice" section in the main paper to compute quantities of interest, i.e., $\hat{u}'_{t+k|t} = f_{u,k}(R'_{t+1:t+k}, x_t)$, $\hat{\Sigma}'_{t+k|t} = f_{\Sigma,k}(R'_{t+1:t+k}, x_t)$, $\hat{H}'_{t+k|t}$, $w'_{t+k|t}$, $p'_{t+k+1|t}$ and $U'_{t+k+1|t}$.

\begin{figure}[h]
\centering
\begin{tikzpicture}[scale=1,transform shape, shorten >=1pt,->,draw=black!100, node distance=\layersep, thick] 
    \tikzstyle{input text}=[draw=white,minimum size=17pt,inner sep=0pt]
    \tikzstyle{input neuron}=[circle,draw=black!100,minimum size=25pt,inner sep=0pt,thick]
    \tikzstyle{hidden neuron1}=[draw=black!100,minimum size=25pt,inner sep=0pt,thick]
    \tikzstyle{hidden neuron}=[circle,draw=black!100,minimum size=25pt,inner sep=0pt,thick]
    \tikzstyle{hidden text}=[draw=white,minimum size=17pt,inner sep=0pt,thick]
    \tikzstyle{unit}=[circle,draw=black!100,minimum size=25pt,inner sep=0pt,thick]


    \node[input neuron, pin={[pin edge={<-}]left:$x_{t,1}$}] (I-1) at (0,1.5) {};
    \node[input text] (I-0) at (0,0.5) {$\vdots$};
    \node[input neuron, pin={[pin edge={<-}]left:$x_{t,5}$}] (I-2) at (0,-0.5) {};

    \node[input neuron, pin={[pin edge={<-}]left:$z_{t,t+1,1}$}] (I-3) at (0,-2.5) {};
    \node[input text] (I-0) at (0,-3.5) {$\vdots$};
    \node[input neuron, pin={[pin edge={<-}]left:$z_{t,t+1,32}$}] (I-4) at (0,-4.5) {};
    \node[input text] (I-0) at (0,-5.5) {$\vdots$};
    \node[input neuron, pin={[pin edge={<-}]left:$z_{t,t+K,1}$}] (I-5) at (0,-6.5) {};
    \node[input text] (I-0) at (0,-7.5) {$\vdots$};
    \node[input neuron, pin={[pin edge={<-}]left:$z_{t,t+K,32}$}] (I-6) at (0,-8.5) {};

    \path node[hidden neuron] (H-1) at (1.5,-2.5) {$h_{t+1|t,1}$};

    \node[input text] (H-0) at (1.5,-3.5) {$\vdots$};
    \path node[hidden neuron] (H-2) at (1.5,-4.5) {$h_{t+1|t,16}$};
    \node[input text] (H-0) at (1.5,-5.5) {$\vdots$};
    \path node[hidden neuron] (H-3) at (1.5,-6.5) {$h_{t+K|t,1}$};
    
    \node[input text] (H-0) at (1.5,-7.5) {$\vdots$};
    \path node[hidden neuron] (H-4) at (1.5,-8.5) {$h_{t+K|t,16}$};


    \path node[hidden neuron] (J-1) at (3,-2.5) {$r'_{t+1|t,1}$};
    \node[input text] (J-0) at (3,-3.5) {$\vdots$};
    \path node[hidden neuron] (J-2) at (3,-4.5) {$r'_{t+1|t,4}$};

    \node[input text] (J-0) at (3,-5.5) {$\vdots$};

    \path node[hidden neuron] (J-3) at (3,-6.5) {$r'_{t+K|t,1}$};
    \node[input text] (J-0) at (3,-7.5) {$\vdots$};
    \path node[hidden neuron] (J-4) at (3,-8.5) {$r'_{t+K|t,4}$};

    \foreach \src in {1,...,2}
        \foreach \dest in {1,...,4}
            \path (I-\src) edge (H-\dest);

    \foreach \src in {3,...,4}
        \foreach \dest in {1,...,2}
            \path (I-\src) edge (H-\dest);
            
    \foreach \src in {5,...,6}
        \foreach \dest in {3,...,4}
            \path (I-\src) edge (H-\dest);

    \foreach \src in {1,...,2}
        \foreach \dest in {1,...,2}
            \path (H-\src) edge (J-\dest);
            
    \foreach \src in {3,...,4}
        \foreach \dest in {3,...,4}
            \path (H-\src) edge (J-\dest);

    \draw[thick,dashed] ($(I-1.north west)+(-0.2,0.3)$)  rectangle ($(I-2.south east)+(0.2,-0.3)$) ;
    \draw[thick,dashed] ($(I-3.north west)+(-0.2,0.3)$)  rectangle ($(I-4.south east)+(0.2,-0.3)$) ;
    \draw[thick,dashed] ($(I-5.north west)+(-0.2,0.3)$)  rectangle ($(I-6.south east)+(0.2,-0.3)$) ;
    
    \draw[thick,dashed] ($(H-1.north west)+(-0.2,0.3)$)  rectangle ($(H-2.south east)+(0.2,-0.3)$) ;
    \draw[thick,dashed] ($(H-3.north west)+(-0.2,0.3)$)  rectangle ($(H-4.south east)+(0.2,-0.3)$) ;
    
    \draw[thick,dashed] ($(J-1.north west)+(-0.2,0.3)$)  rectangle ($(J-2.south east)+(0.2,-0.3)$) ;
    \draw[thick,dashed] ($(J-3.north west)+(-0.2,0.3)$)  rectangle ($(J-4.south east)+(0.2,-0.3)$) ;

\end{tikzpicture}
\caption{Generator Network Architecture. \label{fig_gen}}
\end{figure}

\subsection{Discriminator Network Architecture}
\label{appendix discriminator network architecture}
For the discriminator, the neural networks have  dimension $M\times  32 \times 1$, where $M$ matches the dimensions of the relevant quantity.

The neural network architecture of the discriminator for raw data is shown in Figure \ref{fig_dis}. It takes synthetic asset returns $r'_{t+k|t}$ or non-synthetic asset returns $r_{t+k}$ as one input, and $x_t$ as another input. The  $g_{t+k|t,i}$ units are hidden ReLU nodes, and the $D_{r,t+k|t}$ units are output nodes, representing the discriminator value for synthetic or non-synthetic asset returns.  We use the same discriminator architecture for all $k$.  

Similarly, the neural network architecture of the discriminator that takes synthetic utility $U'_{t+k|t}$ or non-synthetic utility $U_{t+k}$ as one input, and $x_t$ as another input, shown in Figure~\ref{fig_3}. The $m_{t+k|t,i}$ units are hidden nodes (ReLU), and the $D_{U,t+k|t}$ units are output nodes, representing the discriminator value for synthetic or non-synthetic  utility (decision related quantities). We use the same discriminator architecture for all $k$.  
\begin{figure*}[h!]
\centering
\begin{subfigure}[b]{0.49\textwidth}
\centering
\begin{tikzpicture}[scale=1,transform shape, shorten >=1pt,->,draw=black!100, node distance=\layersep, thick] 
    \tikzstyle{input text}=[draw=white,minimum size=17pt,inner sep=0pt]
    \tikzstyle{input neuron}=[circle,draw=black!100,minimum size=25pt,inner sep=0pt,thick]
    \tikzstyle{hidden neuron1}=[draw=black!100,minimum size=25pt,inner sep=0pt,thick]
    \tikzstyle{hidden neuron}=[circle,draw=black!100,minimum size=25pt,inner sep=0pt,thick]
    \tikzstyle{hidden text}=[draw=white,minimum size=17pt,inner sep=0pt,thick]
    \tikzstyle{unit}=[circle,draw=black!100,minimum size=25pt,inner sep=0pt,thick]

    \node[input neuron, pin={[pin edge={<-}]left:$x_{t,1}$}] (I-1) at (0,1.5) {};
    \node[input text] (I-0) at (0,0.5) {$\vdots$};
    \node[input neuron, pin={[pin edge={<-}]left:$x_{t,5}$}] (I-2) at (0,-0.5) {};

    \node[input neuron, pin={[pin edge={<-}]left:$r'_{t+1|t,1}$($r_{t+1,1}$)}] (I-3) at (0,-2.5) {};
    \node[input text] (I-0) at (0,-3.5) {$\vdots$};
    \node[input neuron, pin={[pin edge={<-}]left:$r'_{t+1|t,4}$($r_{t+1,4}$)}] (I-4) at (0,-4.5) {};
    \node[input text] (I-0) at (0,-5.4) {$\vdots$};
    \node[input neuron, pin={[pin edge={<-}]left:$r'_{t+K|t,1}$($r_{t+K,1}$)}] (I-5) at (0,-6.5) {};
    \node[input text] (I-0) at (0,-7.5) {$\vdots$};
    \node[input neuron, pin={[pin edge={<-}]left:$r'_{t+K|t,4}$($r_{t+K,4}$)}] (I-6) at (0,-8.5) {};

    \path node[hidden neuron] (H-1) at (1.5,-2.5) {$g_{t+1|t,1}$};

    \node[input text] (H-0) at (1.5,-3.5) {$\vdots$};
    \path node[hidden neuron] (H-2) at (1.5,-4.5) {$g_{t+1|t,32}$};
    \node[input text] (H-0) at (1.5,-5.4) {$\vdots$};
    \path node[hidden neuron] (H-3) at (1.5,-6.5) {$g_{t+K|t,1}$};
    
    \node[input text] (H-0) at (1.5,-7.5) {$\vdots$};
    \path node[hidden neuron] (H-4) at (1.5,-8.5) {$g_{t+K|t,32}$};


    \path node[hidden neuron] (J-1) at (3,-3.5) {$D_{r,t+1|t}$};

    \node[input text] (J-0) at (3,-5.4) {$\vdots$};

    \path node[hidden neuron] (J-2) at (3,-7.5) {$D_{r,t+K|t}$};

    \foreach \src in {1,...,2}
        \foreach \dest in {1,...,4}
            \path (I-\src) edge (H-\dest);

    \foreach \src in {3,...,4}
        \foreach \dest in {1,...,2}
            \path (I-\src) edge (H-\dest);
            
    \foreach \src in {5,...,6}
        \foreach \dest in {3,...,4}
            \path (I-\src) edge (H-\dest);

    \foreach \src in {1,...,2}
        \foreach \dest in {1}
            \path (H-\src) edge (J-\dest);
            
    \foreach \src in {3,...,4}
        \foreach \dest in {2}
            \path (H-\src) edge (J-\dest);

    \draw[thick,dashed] ($(I-1.north west)+(-0.2,0.3)$)  rectangle ($(I-2.south east)+(0.2,-0.3)$) ;
    \draw[thick,dashed] ($(I-3.north west)+(-0.2,0.3)$)  rectangle ($(I-4.south east)+(0.2,-0.3)$) ;
    \draw[thick,dashed] ($(I-5.north west)+(-0.2,0.3)$)  rectangle ($(I-6.south east)+(0.2,-0.3)$) ;
    
    \draw[thick,dashed] ($(H-1.north west)+(-0.2,0.3)$)  rectangle ($(H-2.south east)+(0.2,-0.3)$) ;
    \draw[thick,dashed] ($(H-3.north west)+(-0.2,0.3)$)  rectangle ($(H-4.south east)+(0.2,-0.3)$) ;

\end{tikzpicture}
\caption{Discriminator Network Architecture for asset returns. \label{fig_dis}}
\end{subfigure}
\begin{subfigure}[b]{0.49\textwidth}
\centering
\begin{tikzpicture}[scale=1,transform shape, shorten >=1pt,->,draw=black!100, node distance=\layersep, thick] 
    \tikzstyle{input text}=[draw=white,minimum size=17pt,inner sep=0pt]
    \tikzstyle{input neuron}=[circle,draw=black!100,minimum size=25pt,inner sep=0pt,thick]
    \tikzstyle{hidden neuron1}=[draw=black!100,minimum size=25pt,inner sep=0pt,thick]
    \tikzstyle{hidden neuron}=[circle,draw=black!100,minimum size=25pt,inner sep=0pt,thick]
    \tikzstyle{hidden text}=[draw=white,minimum size=17pt,inner sep=0pt,thick]
    \tikzstyle{unit}=[circle,draw=black!100,minimum size=25pt,inner sep=0pt,thick]


    \node[input neuron, pin={[pin edge={<-}]left:$x_{t,1}$}] (I-1) at (0,1.5) {};
    \node[input text] (I-0) at (0,0.5) {$\vdots$};
    \node[input neuron, pin={[pin edge={<-}]left:$x_{t,5}$}] (I-2) at (0,-0.5) {};

    \node[input neuron, pin={[pin edge={<-}]left:$U'_{t+1|t}$($U_{t+1}$)}] (I-3) at (0,-3.5) {};
    \node[input text] (I-0) at (0,-5.5) {$\vdots$};
    \node[input neuron, pin={[pin edge={<-}]left:$U'_{t+K|t}$($U_{t+K}$)}] (I-4) at (0,-7.5) {};
    \path node[hidden neuron] (H-1) at (1.5,-2.5) {$m_{t+1|t,1}$};

    \node[input text] (H-0) at (1.5,-3.5) {$\vdots$};
    \path node[hidden neuron] (H-2) at (1.5,-4.5) {$m_{t+1|t,32}$};
    \node[input text] (H-0) at (1.5,-5.5) {$\vdots$};
    \path node[hidden neuron] (H-3) at (1.5,-6.5) {$m_{t+K|t,1}$};
    
    \node[input text] (H-0) at (1.5,-7.5) {$\vdots$};
    \path node[hidden neuron] (H-4) at (1.5,-8.5) {$m_{t+K|t,32}$};


    \path node[hidden neuron] (J-1) at (3,-3.5) {$D_{U,t+1|t}$};
    \node[input text] (J-0) at (3,-5.5) {$\vdots$};
    \path node[hidden neuron] (J-2) at (3,-7.5) {$D_{U,t+K|t}$};

    \foreach \src in {1,...,2}
        \foreach \dest in {1,...,4}
            \path (I-\src) edge (H-\dest);

    \foreach \src in {3}
        \foreach \dest in {1,...,2}
            \path (I-\src) edge (H-\dest);
            
    \foreach \src in {4}
        \foreach \dest in {3,...,4}
            \path (I-\src) edge (H-\dest);

    \foreach \src in {1,...,2}
        \foreach \dest in {1}
            \path (H-\src) edge (J-\dest);
            
    \foreach \src in {3,...,4}
        \foreach \dest in {2}
            \path (H-\src) edge (J-\dest);

    \draw[thick,dashed] ($(I-1.north west)+(-0.2,0.3)$)  rectangle ($(I-2.south east)+(0.2,-0.3)$) ;

    \draw[thick,dashed] ($(H-1.north west)+(-0.2,0.3)$)  rectangle ($(H-2.south east)+(0.2,-0.3)$) ;
    \draw[thick,dashed] ($(H-3.north west)+(-0.2,0.3)$)  rectangle ($(H-4.south east)+(0.2,-0.3)$) ;

\end{tikzpicture}
\caption{Discriminator Network Architecture for utility. \label{fig_3}}
\end{subfigure}
\caption{Discriminator Network Architecture}
\end{figure*}

\begin{figure*}[h!]
\centering
\begin{subfigure}[b]{0.32\textwidth}
\centering
\includegraphics[width = \textwidth]{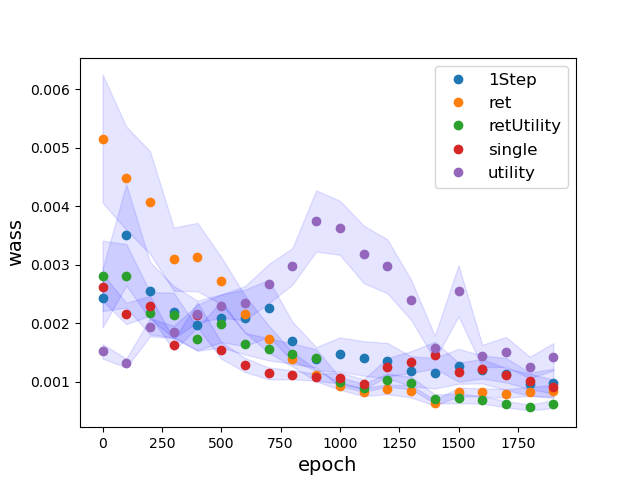}
\caption{Initial step asset returns~~~~~~~~~ \label{fig_sim2_ret1}}
\end{subfigure}
\hfill
\begin{subfigure}[b]{0.32\textwidth}
\centering
\includegraphics[width = 1\textwidth]{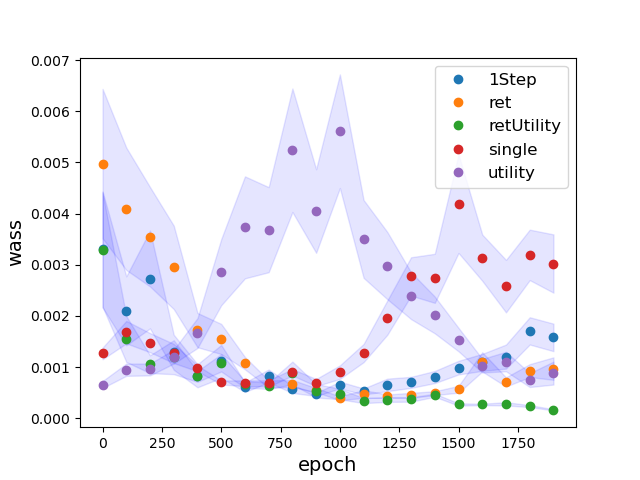}
\caption{Initial step precision matrix \label{fig_sim2_invCov1}}
\end{subfigure}
\hfill
\begin{subfigure}[b]{0.32\textwidth}
\centering
\includegraphics[width = 1\textwidth]{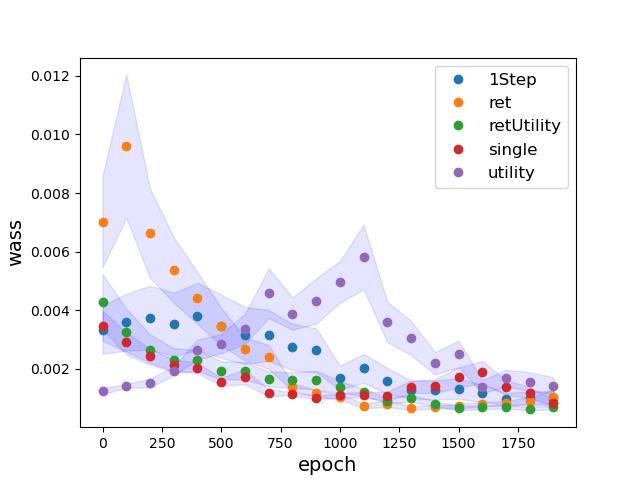}
\caption{Initial step portfolio weights \label{fig_sim2_portW1}}
\end{subfigure}
\vskip\baselineskip
\begin{subfigure}[b]{0.32\textwidth}
\centering
\includegraphics[width = \textwidth]{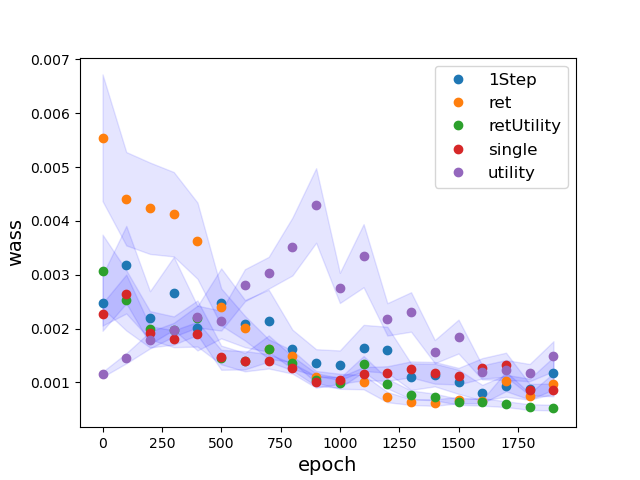}
\caption{Final step asset returns~~~~~~~~~ \label{fig_sim2_ret4}}
\end{subfigure}
\hfill
\begin{subfigure}[b]{0.32\textwidth}
\centering
\includegraphics[width = 1\textwidth]{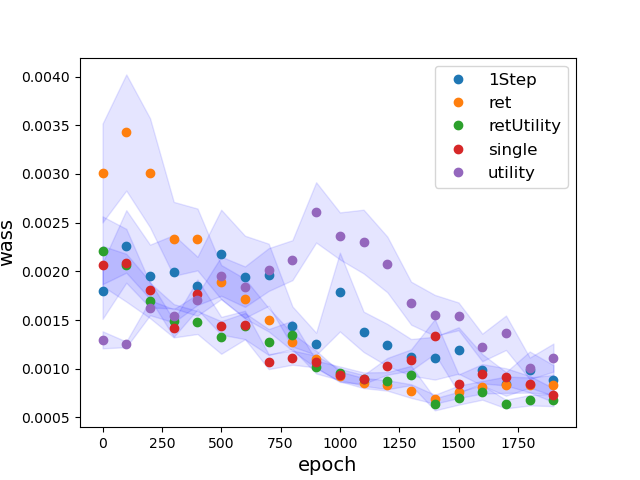}
\caption{Final step precision matrix~~~~ \label{fig_sim2_invCov3}}
\end{subfigure}
\hfill
\begin{subfigure}[b]{0.32\textwidth}
\centering
\includegraphics[width = 1\textwidth]{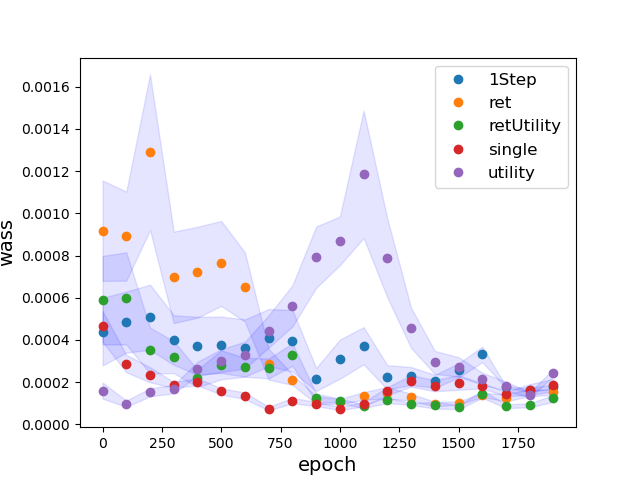}
\caption{Final step portfolio weights \label{fig_sim2_portW3}}
\end{subfigure}
\caption{\label{fig_sim2}
Wasserstein Distance between non-synthetic and synthetic data for simulated time series. \ref{fig_sim2_ret1} and \ref{fig_sim2_ret4}  asset returns; \ref{fig_sim2_invCov1} and \ref{fig_sim2_invCov3}  estimated precision matrix; \ref{fig_sim2_portW1} and \ref{fig_sim2_portW3}  estimated portfolio weights.  An epoch is one full pass of the data, and the shaded areas are confidence bands computed over 5 runs. }
\end{figure*}

\subsection{Additional Experiments}
\label{experiment appendix}
To study how would a different decision related quantity would affect our model performance, We use the a similar setting for simulated data experiment as in Section \ref{sim experiment} but set the risk preference parameter $\phi = 0.5$.

Figures~\ref{fig_sim2_ret1} to~\ref{fig_sim2_portW3} confirm that the Ret-Utility-GAN 
has the best performance in terms of minimizing 
Wasserstein distance for each of asset returns, estimated precision matrix, and portfolio weights.   It
 performs (1) better than Ret-GAN, confirming  that introducing the decision-related quantity, utility,   provides useful moderation on the distribution on  synthetic data, and also stabilize the training process; (2) better than the Utility-GAN, which shows that also including the asset return loss  helps; (3)
 better than the  Single-GAN, which shows that imposing loss for each quantity is more effective
than a single loss  on stacked quantities; and 
(4) better than the 1step-GAN, which 
shows that the Ret-Utility-GAN is effective in addressing   exposure bias.

This experiment shows that our DAT-CGANs are applicable to users with different preference, and it performs better than the standard baselines. 

\end{document}